\newtheorem{theorem}{Theorem}
\newtheorem{proposition}[theorem]{Proposition}
\newcommand{\norm}[1]{\left\lVert#1\right\rVert}
\ifcvprfinal\pagestyle{empty}\fi
\newenvironment{packed_item}{
\begin{itemize}
  \setlength{\itemsep}{1pt}
  \setlength{\parskip}{2pt}
  \setlength{\parsep}{0pt}
}{\end{itemize}}
\begin{document}

\title{Depth Sensing Beyond LiDAR Range }

\author{Kai Zhang$^1$
\and
Jiaxin Xie$^2$
\and 
Noah Snavely$^1$
\and
Qifeng Chen$^2$
\and
{
\centering
$^1$Cornell Tech, Cornell University \qquad $^2$HKUST
}
}

\maketitle

\begin{abstract}
Depth sensing is a critical component of autonomous driving technologies, but today's LiDAR- or stereo camera--based solutions have limited range. We seek to increase the maximum range of self-driving vehicles' depth perception modules for the sake of better safety. To that end, we propose a novel three-camera system that utilizes small field of view cameras. Our system, along with our novel algorithm for computing metric depth, does not require full pre-calibration and can output dense depth maps with practically acceptable accuracy for scenes and objects at long distances not well covered by most commercial LiDARs.

\end{abstract}

\section{Introduction}

Depth perception is crucial in autonomous driving for obstacle avoidance, route planning, etc. Existing depth-sensing solutions typically rely on LiDAR, stereo cameras, or time-of-flight sensors~
\cite{waymo_open_dataset,caesar2019nuscenes,Geiger2012CVPR}. To the best of our knowledge, these systems have significant limits on their maximum ranges. For instance, the recent
Waymo~\cite{waymo_open_dataset} and nuScenes~\cite{caesar2019nuscenes} self-driving car datasets both feature LiDAR ranges of $\sim$80 meters, while the stereo cameras in KITTI~\cite{Geiger2012CVPR} cannot see distant objects in detail because of their wide field of view (FOV) (about $80^{\circ}$).\footnote{The cameras in the Waymo dataset have a 50-degree horizontal FOV.} It takes just 3 seconds for a vehicle to travel 80 meters at a speed of 60 miles/h, which is too short a time window in unforeseen emergency situations. While some high-end LiDARs claim to reach a maximum of 300 meters, e.g., Velodyne's Alpha Puck$^\text{TM}$~\cite{alpha_puck},
these are not only expensive but also produce very sparse point clouds for distant objects due to power and cost constraints. 
Such limited range can become a critical issue when a self-driving vehicle is a heavily weighted truck, or moving at high speed. The earlier an autonomous vehicle perceives the depth of obstacles on its driving route, the safer the technology is, as an early defensive response can be made in case of emergency.

Hence, 
there is a 
need for dense, accurate depth perception beyond the LiDAR range. In this work, we seek longer-range \textit{dense} depth sensing beyond the 200 meter range, 
which is not well covered by most existing commercial LiDARs for autonomous driving. To that end, we propose a cost-effective solution that utilizes three cameras with small fields of view. Equipped with telephoto lenses, these cameras can perceive faraway scenes or objects.~\footnote{Note that our system is not aimed to replace existing short-range LiDAR, but instead to complement it in a cost-effective way because long-range LiDAR sensors are expensive, power-inefficient and only capture sparse depth measurements for distant objects.} Our novel three-camera setup can resolve geometric ambiguities that arise in stereo systems based on only two small-FOV cameras. For small-FOV stereo cameras, such ambiguities are caused by (1) a small baseline/depth ratio, (2) difficulty in calibrating small-FOV cameras, and (3) maintaining the calibration during usage. Surprisingly, we can solve these problems by adding a specific third camera, without requiring a fully accurate calibration of camera parameters, by using a novel depth disambiguation algorithm. 
Our proposed three-camera system, along with our depth estimation algorithm, can produce dense depth maps without the need to fully pre-calibrate camera intrinsics and extrinsics. Moreover, it is robust to small vibrations in camera orientations that are inevitable for cameras attached to moving vehicles. We demonstrate the effectiveness of our approach with both synthetic and real-world data. Experiments show that our method can achieve a 3\% relative error at a distance of 300 meters in terms of depth estimation accuracy.

In summary, our contributions are three-fold. First, to our knowledge, our approach is the first to address the problem of \textit{dense} depth map acquisition at a range beyond that of most LiDARs in the domain of autonomous driving. Second, we propose a novel camera setup and depth estimation algorithm that requires only partial camera calibration. Third, we validate the effectiveness of our long-range depth-sensing system on both synthetic and real-world data.

\section{Problem setup and related work} \label{sec:problem}

In this section, we formulate our problem setup, review prior work, and analyze the applicability of relevant existing algorithms to our problem.

For depth estimation at a distance over 200 meters, one seemingly straightforward solution is to construct a stereo camera system with two small-FOV cameras and attach it to the vehicle. However, there are several challenges with such a setup that distinguish it from typical stereo camera setups:
\begin{packed_item}
\item Because the baseline is restricted by the vehicle width (e.g., 2 meters), the baseline/depth ratio is very small in our problem setup, leading to a narrow triangulation angle for estimating 3D points from image correspondences. Hence the geometric setting of this problem is particularly ill-conditioned.
\item Unlike cameras with standard FOVs, small-FOV cameras are near-orthographic when a scene's depth variation is much smaller than its average depth. The absence of strong perspective effects can lead to problems when using standard checkerboard-based calibration of intrinsic and extrinsic stereo camera parameters~\cite{zhang2000flexible}. 
\item The reduced FOV increases the system's sensitivity to vibrations. A small perturbation in orientation will lead to a noticeable change in image content. Such vibrations are difficult to avoid in real-world moving-vehicle scenarios, even if the stereo camera is rigidly mounted. 
\end{packed_item}
A pratical solution to long-range depth estimation with small-FOV cameras must address these challenges. Note that our problem setup also shares similarities with those explored in areas including structure from motion (SfM), structure from small motion (SfSM), and uncalibrated stereo rectification and calibration of cameras with telephoto lenses.

\medskip
\noindent\textbf{SfM.} SfM algorithms aim to automatically recover camera poses from image collections~\cite{snavely2006photo,snavely2008modeling,agarwal2011building, schonberger2016structure}. The minimal case is two-view SfM, which is also an important component of multi-view SfM~\cite{beder2006determining}. 
Two-view SfM often works through the decomposition of an essential matrix obtained from intrinsically-calibrated images~\cite{longuet1981computer}, followed by bundle adjustment~\cite{triggs1999bundle}. However, essential matrix estimation is challenging in our case due to its ill-conditioned geometric setup. Another key issue is the \emph{bas-relief ambiguity}~\cite{szeliski1996shape,belhumeur1999bas} present in SfM when the baseline/depth ratio is small and the camera is near-orthographic. The bas-relief ambiguity can cause unwanted distortions of the reconstructed scene, leading to large depth estimation errors for distant objects.

\medskip
\noindent\textbf{SfSM.} Structure from small motion refers to the SfM problem under small camera motion. Previous work~\cite{Yu14,im2015high,ha2016high,Im2019Accurate3R} reconstructs scene geometry from video clips with accidental motion 
caused by handshake. These methods exploit multi-view redundancies in video clips to overcome the high depth uncertainty arising from the small baseline/depth ratio. 
However, SfSM requires a video clip as input for the sake of abundant redundant observations, which is not suitable for autonomous driving due to the real-time constraint, as well as the presence of moving objects such as vehicles and pedestrians. Moreover, Ha et al.~\cite{ha2017closed} observe that SfSM is also vulnerable to the bas-relief ambiguity, and try to reduce this ambiguity with a separate rotation estimation step. But they assume pre-calibration of camera intrinsics, which is not trivial for small-FOV cameras. In addition, the errors in their estimated rotations are relatively large, considering the tiny triangulation angles involved, yielding inaccurate depth estimates for distant scenes. 

\medskip
\noindent\textbf{Uncalibrated stereo rectification.} Stereo cameras are often pre-calibrated before deployment, allowing for online rectification using calibrated intrinsics and poses. The case when such calibration is unavailable has also been studied, e.g., by Loop and Zhang~\cite{loop1999computing}, and Hartley~\cite{hartley1999theory}. However, their methods assume that the fundamental matrix is known. As with two-view SfM, such methods will be brittle in the face of the ill-conditioned fundamental matrix estimation problem and the inherent bas-relief ambiguity.

\medskip
\noindent\textbf{Calibration of cameras with telephoto lenses.}  Huang et al.~\cite{huang2007calibrating} equip a pan-tilt camera with a telephoto lens to capture biometric features over a long range. They demonstrate the degeneracy of calibrating a long-focal-length camera with the 2D-2D correspondences from a checkerboard because the perspective effect is weak. They also show that 2D-3D correspondences are essential for calibrating such a camera. Our proposed approach, however, does not require full calibration of camera intrinsics and is more practically convenient. We only need to know the focal length, which can be read out from the lens's specification sheet.

\section{Method}\label{sec:method}
Our approach has two major components: the camera setup and the accompanying depth estimation algorithm. Details of both components are provided below.

\begin{figure}[t]
    \centering
    \includegraphics[width=0.650\columnwidth]{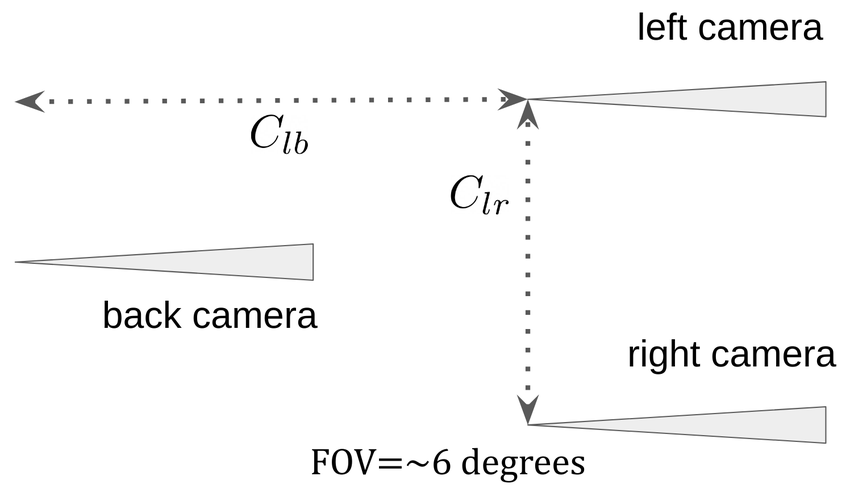}
    \caption{\small Top-down view of our proposed camera setup. The back camera can be positioned slightly higher than the left and right ones so that its view is not obscured. $C_{lr},C_{lb}\approx 2\text{m}$.}
    \label{fig:camera_setup}
\vspace{-2mm}
\end{figure}

Our camera setup requires three small-FOV cameras, placed according to Fig.~\ref{fig:camera_setup}. Two of the cameras form a left-right stereo pair, while a third is placed in the back of these two. The left and right cameras are mounted to a vehicle's front, with the back camera on the vehicle's tail. Each camera faces forward along the driving direction. 
We assume that the three cameras' focal lengths $f$ are the same and known. Furthermore, the distance between the optical centers of the left and right cameras, denoted $C_{lr}$, is assumed to be known, as well as the distance between left and back cameras $C_{lb}$ along the $z$-axis. No additional information is required. Like the baseline $C_{lr}$, the distance $C_{lb}$ should also be as large as possible to benefit subsequent processing. In practice, $C_{lr}$ and $C_{lb}$ are limited by vehicle size.

Our depth estimation algorithm takes the three images captured by our camera system as input and outputs a dense depth map for the left view. Our algorithm, detailed in Algo.~\ref{alg:depth_estimate}, is comprised of three modules: pseudo-rectification, stereo matching, and ambiguity removal. The pseudo-rectification step transforms the left and right images with affine warps so that they form a \textit{pseudo}-stereo pair. After pseudo-rectification, a standard stereo matching algorithm can be used to compute a disparity map. This disparity map, however, has an unknown constant offset as a result of pseudo-rectification. This ambiguity is resolved with the help of the back image, and the ambiguity-free disparity map is finally converted to a depth map.

\begin{algorithm}[t]
    \SetKwInOut{Input}{Input}
    \SetKwInOut{Output}{Output}

    \Input{left, right, back images; $f,C_{lr},C_{lb}$}
    \Output{depth map for left view}
    Pseudo-rectify left, right images \\
    Estimate disparity\\
    Remove ambiguity in the estimated disparity map \\
    Convert disparity to depth, and return
    \caption{Depth Estimation}
    \label{alg:depth_estimate}
\end{algorithm}

\medskip
\noindent\textbf{Step 1: Pseudo-rectification.} Standard stereo rectification utilizes the intrinsics and relative poses of two cameras to warp the left and right images with homographies such that the epipolar lines are aligned with image $x$-axis. Pure 2D methods assuming a known fundamental matrix have also been proposed, e.g., by Loop and Zhang~\cite{loop1999computing}. However, in our problem setup, neither the full intrinsics and relative pose are known accurately, nor can the fundamental matrix be reliably estimated in the case of a small baseline/depth ratio.  We propose a pseudo-rectification procedure based upon the observation stated in Prop.~\ref{prop:affine_homo}. Our algorithm approximately rectifies the two images with estimated affine transformations, and only depends on the left and right images.

\begin{proposition}
\label{prop:affine_homo}
When a small-FOV camera is rotated by a small amount, the homography warping the original image to the rotated view is approximately an affine transformation.
\end{proposition}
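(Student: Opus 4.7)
The plan is to write down the exact homography induced by a pure rotation of the camera about its optical center, apply a first-order small-angle expansion, and then use the small-FOV hypothesis to show that the last row of the resulting $3{\times}3$ matrix acts as $(0,0,1)$ over the whole image, collapsing the projective warp to an affine one.

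First I would invoke the standard fact that when a camera rotates about its optical center by $R$ with zero translation, the two views are related by the depth-independent homography $H = K R K^{-1}$, where $K$ is the intrinsic matrix. Taking square pixels and, without loss of generality, the principal point at the origin, one has $K = \mathrm{diag}(f,f,1)$ and $K^{-1} = \mathrm{diag}(1/f,1/f,1)$. This regime is physically appropriate for Proposition~\ref{prop:affine_homo}, since the rotations in question are small vibration-induced deviations from a nominal forward-looking pose.

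Second, I would substitute the first-order approximation $R \approx I + [\omega]_\times$ for a small rotation vector $\omega=(\omega_x,\omega_y,\omega_z)^T$. A direct computation of $K[\omega]_\times K^{-1}$ yields
\begin{equation*}
H \;\approx\; I + \begin{pmatrix} 0 & -\omega_z & f\omega_y \\ \omega_z & 0 & -f\omega_x \\ -\omega_y/f & \omega_x/f & 0 \end{pmatrix}.
\end{equation*}
The top two rows already define a generic affine map (an in-plane rotation by $\omega_z$ plus pixel shifts $f\omega_y$ and $-f\omega_x$ modelling pitch/yaw). What matters is the bottom row $(-\omega_y/f,\,\omega_x/f,\,1)$: applied to an image point $(x,y,1)^T$, the third homogeneous coordinate of $H(x,y,1)^T$ equals $1 + (\omega_x y - \omega_y x)/f$.

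Finally, I would close with the small-FOV hypothesis. Image coordinates in a small-FOV camera satisfy $|x|,|y| \lesssim f\tan(\mathrm{FOV}/2)$, so $|x|/f$ and $|y|/f$ are uniformly small across the sensor. The perturbation $(\omega_x y - \omega_y x)/f$ is therefore second-order in $(\|\omega\|,\,\tan(\mathrm{FOV}/2))$, and the third coordinate stays within $1 + O(\|\omega\|\tan(\mathrm{FOV}/2))$ of unity. Dropping it turns the projective division into a no-op, and the warp reduces to the affine map read off from the top two rows. The main obstacle, in my view, is less the calculation than being precise about the two smallness regimes simultaneously, since the statement is vacuous without a quantitative error bound; I would also flag that the roll component $\omega_z$ contributes an exactly affine part, while the pitch/yaw components $\omega_x,\omega_y$ are only approximately affine, with error vanishing as the FOV shrinks.
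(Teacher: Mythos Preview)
Your argument is correct and complete; the step from the exact rotational homography $H=KRK^{-1}$ to the first-order expansion, followed by bounding the bottom row via $|x|/f,|y|/f\le\tan(\mathrm{FOV}/2)$, is exactly the right mechanism. However, the paper does not proceed this way.

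The paper's proof decomposes the rotation into Euler angles, $\mathbf{R}=\mathbf{R}_z(\alpha)\mathbf{R}_y(\beta)\mathbf{R}_x(\gamma)$, and argues separately for each factor. It observes that $\mathbf{R}_z$ is exactly affine; for $\mathbf{R}_x(\gamma)$ it parametrizes a 3D point by the elevation angle $\theta=\arctan(y/z)$, writes the rotated ray direction as $\tan(\theta-\gamma)$, linearizes $\tan$ using the fact that both $|\theta|$ and $|\theta-\gamma|$ are bounded by $\mathrm{FOV}/2$, and reads off $u'\approx u$, $v'\approx v-f\gamma$ after projection; the $\mathbf{R}_y$ case is symmetric. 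So the paper works in 3D and projects, while you work directly with the $3\times3$ image-space matrix. Your route is more compact and delivers an explicit error term $O(\|\omega\|\tan(\mathrm{FOV}/2))$ in one stroke, which is arguably cleaner for the quantitative caveat you raise at the end. The paper's route, on the other hand, makes the physical interpretation of each rotation component more transparent (roll $\to$ in-plane rotation, pitch/yaw $\to$ near-pure image translations), which is what the authors later lean on when they tie the unknown disparity offset to the unknown $y$-axis orientation. Both arguments reach the same conclusion and both correctly isolate roll as exactly affine versus pitch/yaw as approximately affine.
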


\begin{proof}
Let the 3D rotation be $\mathbf{R}=\mathbf{R}_z(\alpha)\mathbf{R}_y(\beta)\mathbf{R}_x(\gamma)$, where $\alpha,\beta,\gamma$ are Euler angles. Our proof strategy is to show that the three component-wise rotations all result in approximately affine transformations. First, the homography representing $\mathbf{R}_z(\alpha)$ is always affine. Now let's consider $\mathbf{R}_x(\gamma)$. Let $(x, y, z)$ be a 3D point in the camera coordinate frame, and denote $\frac{y}{z}=\tan\theta$, where $\theta$ is the angle between the $z$-axis and the vector $(0,y,z)$. Suppose $(x, y, z)$ becomes $(x', y',z')$ after $\mathbf{R}_x(\gamma)$ is applied. We then have
\begin{equation}
    x'=x,\frac{y'}{z'}=\tan (\theta -\gamma),z'=z \frac{\cos(\theta-\gamma)}{\cos \theta}.
\end{equation}
Because both the camera FOV and the rotation is small, both $\lvert \theta\rvert$ and $\lvert\theta-\gamma\rvert$ are roughly bounded by $\frac{FOV}{2}$ and hence small. We then have the approximations: 
\begin{equation}
\frac{x'}{z'}\approx \frac{x}{z}, \frac{y}{z}\approx \theta, \frac{y'}{z'}\approx \theta -\gamma .  
\end{equation}
We then project the 3D point into image space via $u=f\frac{x}{z}+c_x, v=f\frac{y}{z}+c_y$, where $f$ is the focal length, $(c_x, c_y)$ is the principal point, and $(u,v)$ are pixel coordinates. This gives us
\begin{equation}
u'\approx u, v'\approx v-f\gamma .
\end{equation}
This indicates that $\mathbf{R}_x(\gamma)$ approximately translates the image along the row axis. By similar logic, one can show that the homography representing $\mathbf{R}_y(\beta)$ is also approximately affine, which completes our proof.
\end{proof}

\begin{algorithm}[t]
    \SetKwInOut{Input}{Input}
    \SetKwInOut{Output}{Output}

    \Input{left and right images}
    \Output{pseudo-rectified left and right images}
    Initialize $\mathbf{H}^{(l)}=\mathbf{H}^{(r)}=[\mathbf{I}_{2\times 2},\mathbf{0}_{2\times 1}]$ \\

    Detect matches between left and right images \\

    \For{$t=1:T$}
    {
    Randomly sample $M$ matches\\
    Solve for candidate $\mathbf{H}^{(l)}_{21},\mathbf{H}^{(l)}_{22},\mathbf{H}^{(r)}_{21},\mathbf{H}^{(r)}_{22},\mathbf{H}^{(r)}_{23}$ \\
    If \# inliers increases, update the matrix entries
    } 
    
    Update $\mathbf{H}^{(l)}_{11},\mathbf{H}^{(l)}_{12},\mathbf{H}^{(r)}_{11},\mathbf{H}^{(r)}_{12}$ by enforcing the norm and determinant constraints \\

    Update $\mathbf{H}^{(r)}_{13}$ by imposing the disparity constraint\\
    
    Warp left and right images by $\mathbf{H}^{(l)}, \mathbf{H}^{(r)}$, respectively
    \caption{Pseudo-rectification}
    \label{alg:pseudo_rectify}
\end{algorithm}

Algorithmic details of our pseudo-rectification are specified in Algo.~\ref{alg:pseudo_rectify}. We use RANSAC~\cite{fischler1981random} to find a pair of rectifying affine transformations, $\mathbf{H}^{(l)},\mathbf{H}^{(r)}\in \mathcal{R}^{2\times 3}$, that map corresponding pixels to the same $y$-coordinates. This $y$-coordinate constraint only fixes some of the parameters in $\mathbf{H}^{(l)},\mathbf{H}^{(r)}$. To determine the rest, we need additional constraints. For instance, we choose $\mathbf{H}^{(l)}$ to be rigid, which preserves inter-pixel distances and is important for our disparity disambiguation. Other constraints are also imposed as needed.

In steps 1-2, we initialize two identity affine transformations, and detect sparse feature matches between the left and right images using SURF keypoints~\cite{bay2006surf}. Let the $N$ detected matches be $\{(\mathbf{x}^{(l)}_{i}, \mathbf{x}^{(r)}_{i}),i=1,\dots,N\}$, where %
$\mathbf{x}^{(l)}_{i},\mathbf{x}^{(r)}_{i}$
 are homogeneous pixel coordinates in the left and right views, respectively. In steps 3-7, we solve for $\mathbf{H}^{(l)}_{21},\mathbf{H}^{(l)}_{22},\mathbf{H}^{(r)}_{21},\mathbf{H}^{(r)}_{22},\mathbf{H}^{(r)}_{23}$ with RANSAC, by enforcing that corresponding pixels should have the same $y$-coordinate in the rectified views. At each RANSAC trial, a subset of $M$ matches
is randomly sampled; then we construct a homogeneous linear system of $M$ equations, with each sampled match resulting in one equation,
\begin{equation}
\langle\mathbf{H}^{(l)}_{2,1:3}, \mathbf{x}^{(l)}\rangle-\langle\mathbf{H}^{(r)}_{2,1:3}, \mathbf{x}^{(r)}\rangle=0
.\footnote{We use $\langle\cdot,\cdot \rangle$ to represent the inner product of two vectors. We also follow MATLAB notation of slicing matrices.  } \label{eq:equal_y}
\end{equation}
Additionally, because it is the difference between $\mathbf{H}_{23}^{(l)}$ and $\mathbf{H}_{23}^{(r)}$ that matters, rather than their absolute values, in Eq.~\ref{eq:equal_y}, we manually set $\mathbf{H}_{23}^{(l)}=0$. The SVD solution to the homogeneous linear system is also scaled such that $\mathbf{H}_{22}^{(l)}>0$ and $||\mathbf{H}_{2,1:2}^{(l)}||=1$.\footnote{$\norm{\cdot}$ denotes the $L_2$-norm of a vector unless otherwise noted.} In step 6, the number of inliers is defined as
\begin{equation}
\sum_{i=1}^N \mathbbm{1}\big\{\big\lvert \langle\mathbf{H}^{(l)}_{2,1:3}, \mathbf{x}^{(l)}_i\rangle-\langle\mathbf{H}^{(r)}_{2,1:3}, \mathbf{x}^{(r)}_i\rangle\big\rvert<\epsilon\big\},\label{eq:inlier}
\end{equation}
where $\mathbbm{1}\{\cdot \}$ is the indicator function, and $\epsilon$ is a threshold on the residual epipolar errors after rectification. In step 8, we solve for $\mathbf{H}^{(l)}_{11},\mathbf{H}^{(l)}_{12}$ by further imposing the norm constraint $||\mathbf{H}_{1,1:2}^{(l)}||=||\mathbf{H}_{2,1:2}^{(l)}||$ and the determinant constraint $\det(\mathbf{H}^{(l)}_{1:2,1:2})>0$. Similar constraints are also imposed on $\mathbf{H}^{(r)}$ to get $\mathbf{H}^{(r)}_{11},\mathbf{H}^{(r)}_{12}$. 
Finally, most existing stereo matching algorithms assume the disparity values to be all negative; thus in step 9, we set $\mathbf{H}^{(r)}_{13}$ to the 1-percentile of the set
\begin{equation}
\big\{\langle\mathbf{H}^{(l)}_{1,1:3}, \mathbf{x}^{(l)}_i\rangle-\langle\mathbf{H}^{(r)}_{1,1:3}, \mathbf{x}^{(r)}_i\rangle-\phi, i=1,\dots,N\big\},
\end{equation}
where $\phi=50$ pixels is a protective margin.
We then warp the left and right images with $\mathbf{H}^{(l)}$ and $\mathbf{H}^{(r)}$ respectively to obtain the pseudo-rectified stereo pair.

\medskip
\noindent\textbf{Step 2: Disparity estimation.} In our work, we adopt the state-of-the-art learning-based stereo matching method of Yang \etal using their provided pretrained model~\cite{yang2019hsm}. Other stereo matching algorithms can also be substituted into our pipeline.

\medskip
\noindent\textbf{Step 3: Ambiguity removal.} Because our pseudo-rectification method does not require accurate camera poses, the estimated disparity map is subject to an unknown global shift compared with that from \textit{true} stereo rectification. The unknown shift is physically linked to the unknown $y$-axis orientations of the left and right cameras (see the proof of Prop.~\ref{prop:affine_homo}), and mathematically reflected by the freedom to arbitrarily set $\mathbf{H}^{(l)}_{13}$ and $\mathbf{H}^{(r)}_{23}$ in our pseudo-rectification algorithm. This ambiguity prevents us from recovering absolute depth from disparity. To resolve it, one needs to know the ambiguity-free disparity value for at least one pixel of the rectified left view. This is equivalent to inferring one or more pixels' depths, because of the depth-to-disparity formula
\begin{equation}
d=f\cdot \frac{C_{lr}}{z}, \label{eq:disp_to_depth}
\end{equation}
where $d$ is a pixel's disparity and $z$ is its depth. Our ambiguity removal method utilizes the back view in our camera setup and is based on Prop.~\ref{prop:depth_esti} for inference of pixel depths. 

\begin{proposition}
\label{prop:depth_esti}
For two pixels in the left image with the same depth, if they are $m_{l}$ pixels apart, while their corresponding pixels in the back image are $m_{b}$ pixels apart, then the depth of these two pixels in the left camera's coordinate frame is 
\begin{equation}
z=\frac{C_{lb}}{\frac{m_{l}}{m_{b}}-1}.
\end{equation}
\end{proposition}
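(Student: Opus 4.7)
My approach is to apply the pinhole model to both cameras and observe that the ratio $m_l/m_b$ collapses to a function of depth alone.

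First I would set up coordinates with the left camera at the origin, principal axis along $+z$, focal length $f$. Since both cameras face forward along the driving direction with the same focal length, I model the back camera as a pinhole with an axis parallel to the left camera's and optical center at $(t_x, t_y, -C_{lb})$, where $t_x, t_y$ are whatever lateral offsets the rig geometry induces; the proposition only cites $C_{lb}$, which is already a hint that $t_x, t_y$ should cancel. Let the two left-image pixels back-project, at the common depth $z$, to the 3D points $P_i = (x_i, y_i, z)$, $i=1,2$.

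Next I would write the two image-plane projections. In the left image, pixel $i$ has coordinates $(f x_i / z, f y_i / z)$ up to a common principal-point offset, so
\begin{equation}
m_l = \frac{f}{z}\sqrt{(x_1-x_2)^2 + (y_1-y_2)^2}.
\end{equation}
In the back image, pixel $i$ has coordinates $\bigl(f(x_i-t_x)/(z+C_{lb}), f(y_i-t_y)/(z+C_{lb})\bigr)$, so $(t_x, t_y)$ cancels in the pixel-to-pixel displacement, giving
\begin{equation}
m_b = \frac{f}{z+C_{lb}}\sqrt{(x_1-x_2)^2 + (y_1-y_2)^2}.
\end{equation}
Taking the ratio yields $m_l/m_b = (z+C_{lb})/z$, which rearranges to $z = C_{lb}/(m_l/m_b - 1)$.

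The derivation itself is a one-line algebraic manipulation; the real content is in the modeling assumptions. The lateral offsets $t_x, t_y$ are harmless because they induce a common image-plane shift that cancels in the distance, but a nonzero relative rotation between the left and back cameras would not cancel so cleanly. Thus the main obstacle, if one wanted a quantitative error bound rather than the clean formula, would be bounding the effect of a small residual rotation of the back camera relative to the left; in the paper's small-FOV, rigidly-mounted regime this is at most a second-order perturbation, consistent with Prop.~\ref{prop:affine_homo}, and so the formula stands as stated.
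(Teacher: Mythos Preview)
Your proof is correct and follows essentially the same route as the paper: write the pinhole projection in each camera, observe that the inter-pixel distance scales like $f/z$ with the same 3D baseline $\lVert P_1-P_2\rVert$ in both views, take the ratio, and solve. The paper phrases the key step as $\mathbf{X}^{(l)}_1-\mathbf{X}^{(l)}_2=\mathbf{X}^{(b)}_1-\mathbf{X}^{(b)}_2$ and $z^{(b)}=z^{(l)}+C_{lb}$, which is exactly your parallel-axes assumption; your explicit inclusion of lateral offsets $(t_x,t_y)$ and the remark on residual rotation are harmless elaborations rather than a different argument.
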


\begin{proof}
Denote the two same-depth pixels as $\mathbf{x}^{(l)}_1$ and $\mathbf{x}^{(l)}_2$ in the left image, and their corresponding 3D points as $\mathbf{X}^{(l)}_1$ and $\mathbf{X}^{(l)}_2$ in the camera coordinate frame. Then one can show,
\begin{equation}
    m_l=||\mathbf{x}^{(l)}_1-\mathbf{x}^{(l)}_2||=\frac{f}{z^{(l)}}\cdot ||\mathbf{X}^{(l)}_1-\mathbf{X}^{(l)}_2||,
\end{equation}
where $z^{(l)}$ is the common depth of $\mathbf{x}^{(l)}_1$ and $\mathbf{x}^{(l)}_2$. By similar logic and notation, for the back view, we have 
\begin{equation}
m_b=\frac{f}{z^{(b)}}\cdot ||\mathbf{X}^{(b)}_1-\mathbf{X}^{(b)}_2||.
\end{equation}
Because of our special camera setup, we have 
\begin{equation}
\mathbf{X}^{(l)}_1-\mathbf{X}^{(l)}_2 = \mathbf{X}^{(b)}_1-\mathbf{X}^{(b)}_2,z^{(b)} = z^{(l)}+C_{lb}.
\end{equation}
Hence,
\begin{equation}
\frac{m_l}{m_b}=\frac{z^{(l)}+C_{lb}}{z^{(l)}}.
\end{equation}
Rewriting the equation leads to  $z=z^{(l)}=\frac{C_{lb}}{\frac{m_{l}}{m_{b}}-1}$.
\end{proof}

\begin{algorithm}[t]
    \SetKwInOut{Input}{Input}
    \SetKwInOut{Output}{Output}

    \Input{rectified left image, back image, $f,C_{lr},C_{lb}$, and estimated disparity map}
    \Output{ambiguity-free disparity map}
    Detect matches between left and back images \\
    \For{$t=1:T$}
    {
    Randomly sample two matches\\
    \If {The two matches are far from each other in the left image, and have similar disparity values in the input disparity map}
    {
    Estimate the disparity offset, and cache it
    }
    }
    Shift the input disparity map by the median of all the cached disparity offset estimates
    \caption{Ambiguity Removal}
    \label{alg:ambiguity_remove}
\end{algorithm}

Details of our ambiguity removal algorithm can be found in Algo.~\ref{alg:ambiguity_remove}. We first detect sparse matches between the left and back images with SURF~\cite{bay2006surf}. Then, in steps 2-7, we estimate the unknown disparity offset for a number of times in order to reduce uncertainty of single measurement, each time with two matches randomly sampled from all the matches. Suppose the sampled two matches are $(\mathbf{x}^{(l)}_1,\mathbf{x}^{(b)}_1)$ and $(\mathbf{x}^{(l)}_2,\mathbf{x}^{(b)}_2)$ at each time, in which $\mathbf{x}^{(l)}_1=(u^{(l)}_1, v^{(l)}_1)$ is a pixel in the left view and similar rule applies to $\mathbf{x}^{(l)}_2, \mathbf{x}^{(b)}_1, \mathbf{x}^{(b)}_2$. Let the inter-pixel distances be denoted as $m_l=||\mathbf{x}^{(l)}_1-\mathbf{x}^{(l)}_2||$ in the left view, $m_b=||\mathbf{x}^{(b)}_1-\mathbf{x}^{(b)}_2||$ in the back view, and the values of the input disparity map at pixel locations $\mathbf{x}^{(l)}_1,\mathbf{x}^{(l)}_2$ are $d_1,d_2$, respectively. With the help of Prop.~\ref{prop:depth_esti} and Eq.~\ref{eq:disp_to_depth}, the offset $q$ resolving the ambiguity in the estimated disparity map can then be calculated as,
\begin{equation}
q=f\cdot\frac{C_{lr}}{C_{lb}}\cdot\left({\frac{m_{l}}{m_{b}}-1}\right)-\frac{d_1+d_2}{2}.\label{eq:ambiguity}
\end{equation}
To suppress uncertainties in Eq.~\ref{eq:ambiguity}, disparity offset estimation is only performed when the conditions (1) $m_l>m_b$, (2) $m_l>\delta$, and (3) $|d_1-d_2|<\eta$ are all satisfied, where $\delta$ and $\eta$ are preset thresholds. Condition (1) serves as a sanity check on whether the two sampled matches are physically valid. Condition (2) ensures that there is a sufficient difference between $m_l$ and $m_b$, while Condition (3) aims to guarantee that the two pixels have approximately equal depths. We take the median value of all the candidate disparity offset estimates, then shift the input disparity map to produce the ambiguity-free disparity map in step 8.

\begin{figure*}[t]
\begin{minipage}{0.5\textwidth}
\centering
\includegraphics[width=0.68\columnwidth, height=0.29\columnwidth]{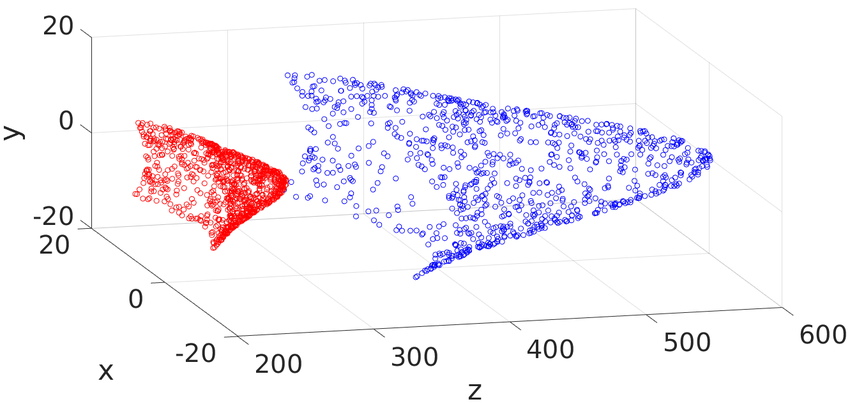}
  \caption{\small Ground-truth (blue) and the reconstructed (red) scene points. The unit for $x,y,z$ axes is meter. 
  }\label{fig:two_view_sfm}
\end{minipage} \quad
\begin{minipage}{0.5\textwidth}
\centering
\includegraphics[width=0.60\columnwidth, height=0.29\columnwidth]{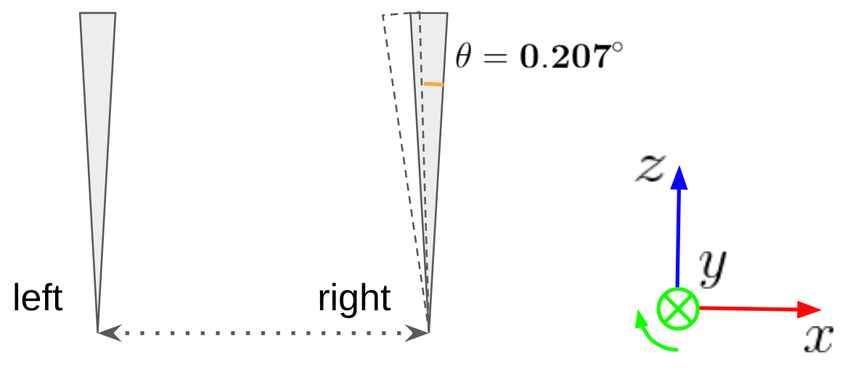}
\caption{\small Top-down view of ground-truth relative pose (solid) and the recovered one (dashed). $\theta$ is exaggerated for illustration.
}
\label{fig:two_view_sfm_pose}
\end{minipage}
\vspace{-2mm}
\end{figure*}

\section{Experiments}

In this section, we first illustrate what role the bas-relief ambiguity plays in our problem with a toy simulation example, and then show the effectiveness of our approach on both synthetic and real-world data. 

\subsection{Bas-relief ambiguity} \label{sec:simulate}
To give readers a more concrete understanding, we demonstrate the influence of the bas-relief ambiguity on our problem mentioned in Sec.~\ref{sec:problem} through a simulation. In our simulation, we first generate a Gaussian surface $z=a+b\cdot \exp{\left(-\frac{x^2+y^2}{2\sigma^2}\right)}$, in which $a=b=300$ and $\sigma=10$. We place the left camera at the origin and the right camera at $(2, 0, 0)$, and set both cameras' orientation to the identity. The image dimensions are set to 4608 $\times$ 3456, and the horizontal FOV to $6^\circ$, with centered principal points. 

We randomly sample 1,500 points from the Gaussian surface, and project them to the left and right images using the ground-truth poses. This yields 1,500 noise-free correspondences. To mimic real-world feature matching, we corrupt the projected pixel locations $(u, v)$ in the right image with random noise $(n_u, n_v)$ according to a 2D Gaussian distribution
$\mathcal{N}(0,\mathrm{diag}(1/\sqrt{2}, {1}/\sqrt{2}))$.
From these noisy matches, together with ground-truth camera intrinsics, we estimate the essential matrix and perform two-view SfM to recover the right camera's relative pose with respect to the left one. Since SfM has a scale ambiguity, we scale the recovered translation vector such that the estimated distance between the two camera centers is the same as in the ground-truth. Figures~\ref{fig:two_view_sfm} and \ref{fig:two_view_sfm_pose} show the reconstructed 3D points and the corresponding recovered relative pose in one of our multiple runs. 
Despite the translation and $x,z$-axis rotations are almost perfectly recovered,
the rotation about $y$-axis has a 0.207$^\circ$ error due to the bas-relief ambiguity, leading to severe distortions in the reconstruction. 

\subsection{Synthetic data}
\noindent\textbf{Setup.} We generate synthetic images, along with ground-truth depth maps, for a set of scenes.\footnote{The 3D models for rendering might not be in their real-world scale.} The horizontal camera FOV is set to $6^\circ$, with image dimensions 4608 $\times$ 3456. The corresponding focal length
is 43,963 pixels. For each scene, the left, right, and back images are rendered according to our camera setup in Fig.~\ref{fig:camera_setup}. To determine the cameras' positions, we first create a bounding box for the scene; then the left camera's pose is manually chosen such that the distance between its camera center and the bounding box centroid equals ${S}\big/\tan (\frac{\mathit{FOV}}{2})$, with $S$ being the bounding box's diagonal length. Both $C_{lr}$ and $C_{lb}$ are set to $1/150\cdot{S}\big/\tan (\frac{FOV}{2})$. Hence the baseline/depth ratio is as small as $\sim$$1/150$; in other words, the intersection angle for the corresponding rays in the left and right views is just  $\sim$$0.382^\circ$.  The setup is equivalent to that of sensing depth for objects $\sim$300m away with a 2m baseline. The relative orientations of the right and back cameras with respect to the left one are generated by randomly sampling their $x,y$ Euler angles from $[-1^\circ,1^\circ]$, and $z$ Euler angle from $[-5^\circ, 5^\circ]$.   

\smallskip
\noindent\textbf{Pseudo-rectification.} We test our proposed pseudo-rectification method on this synthetic data. We set the number of sampled matches $M$ at each RANSAC trial to 10, and the inlier epipolar error threshold $\epsilon$ to 2 pixels. In Fig.~\ref{fig:synthetic_pseudo_rectify}, we show an example for which both the true rectification with ground-truth poses and our purely image-based pseudo-rectification are performed. %
To facilitate visual inspection, we show two $160\times120$ crops of the pseudo-rectified views. Their locations are marked by the red boxes in the uncropped images. 
In addition to the rectification quality,  the horizontal disparity is also visible from the crops. 
We then process the pseudo-rectified stereo pair with a stereo matching method~\cite{yang2019hsm}. In Fig.~\ref{fig:ambiguity_dist}, we show the estimated disparity map, along with the ground-truth, to illustrate the existence of an unknown global shift ($\sim$250px).

\begin{figure}[t]
\centering
    \begin{tabular}{c@{\hspace{0.1em}}c@{\hspace{0.1em}}}
         \includegraphics[width=0.4\columnwidth]{{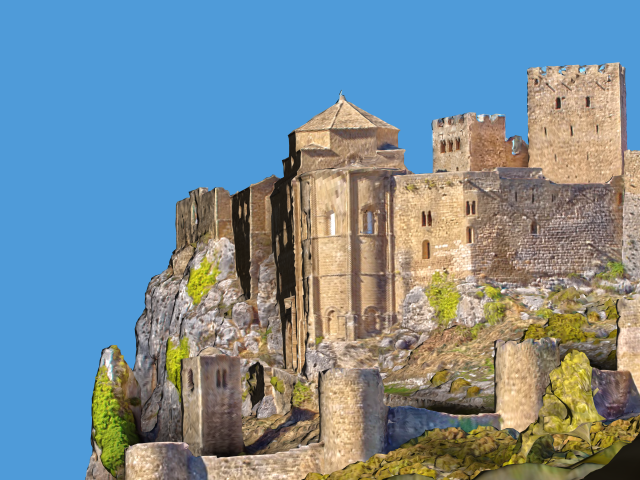}} &
         \includegraphics[width=0.4\columnwidth]{{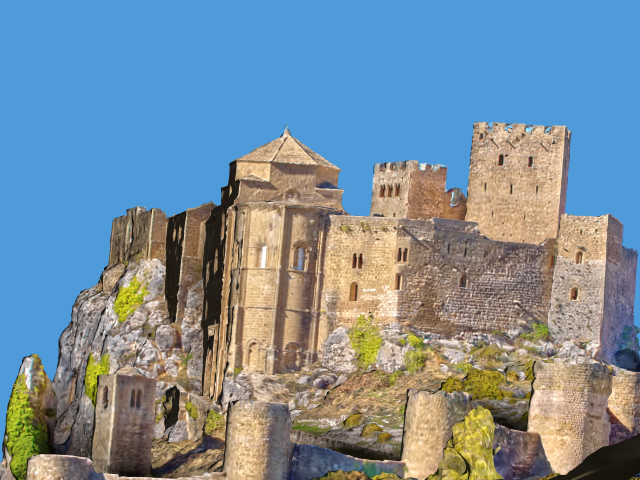}} \vspace{-0.4em}\\
        \multicolumn{2}{c}{\small Raw left and right views}\\ 
         \includegraphics[width=0.4\columnwidth]{{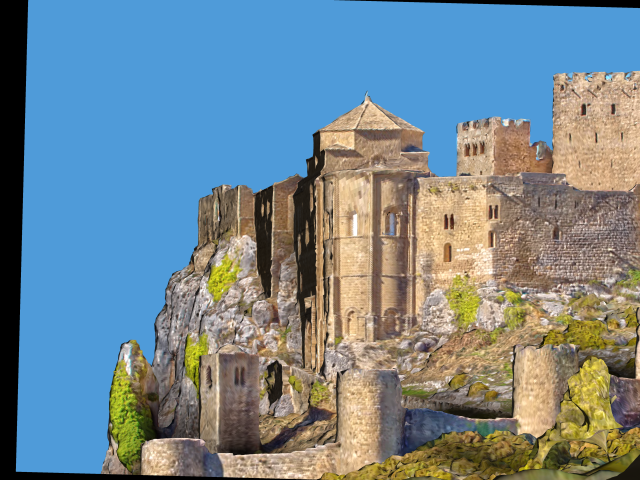}} &
         \includegraphics[width=0.4\columnwidth]{{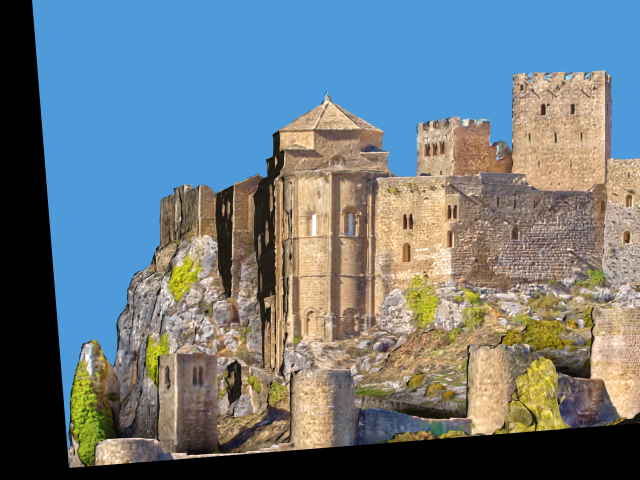}} \vspace{-0.4em}\\
        \multicolumn{2}{c}{\small True-rectified left and right views}\\ 
         \includegraphics[width=0.4\columnwidth]{{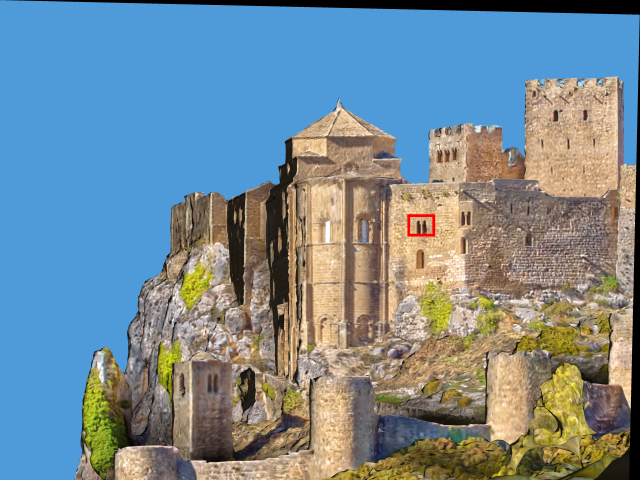}} &
         \includegraphics[width=0.4\columnwidth]{{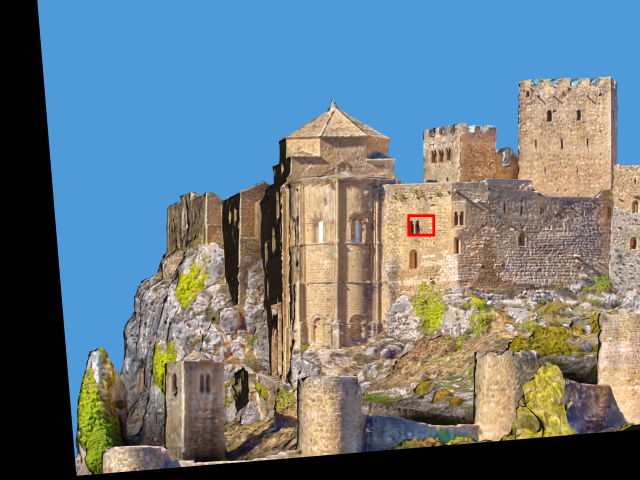}} \vspace{-0.2em}\\
         \includegraphics[width=0.4\columnwidth]{{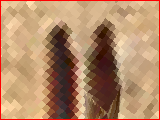}} &
         \includegraphics[width=0.4\columnwidth]{{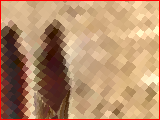}} \vspace{-0.4em} \\
        \multicolumn{2}{c}{\small Pseudo-rectified left and right views}\\ 
     \end{tabular} 
  \caption{\small Pseudo-rectification on synthetic images.
  }\label{fig:synthetic_pseudo_rectify}
\vspace{-2mm}
\end{figure}

\begin{figure}[t]
\centering
    \begin{tabular}{c@{\hspace{0.1em}}c@{\hspace{0.1em}}}
         \includegraphics[width=0.4\columnwidth]{{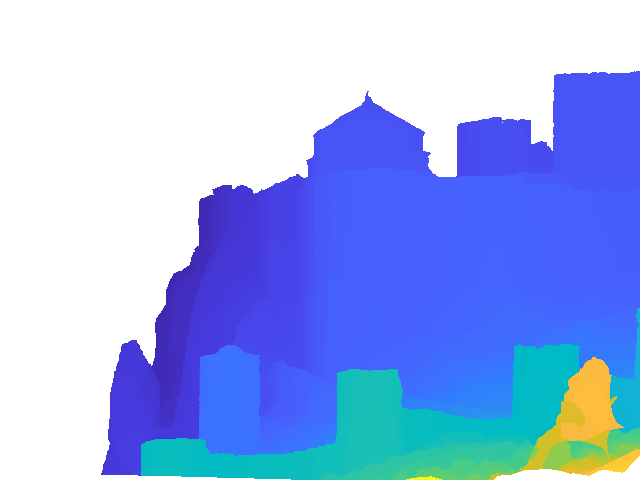}} &
         \includegraphics[width=0.4\columnwidth]{{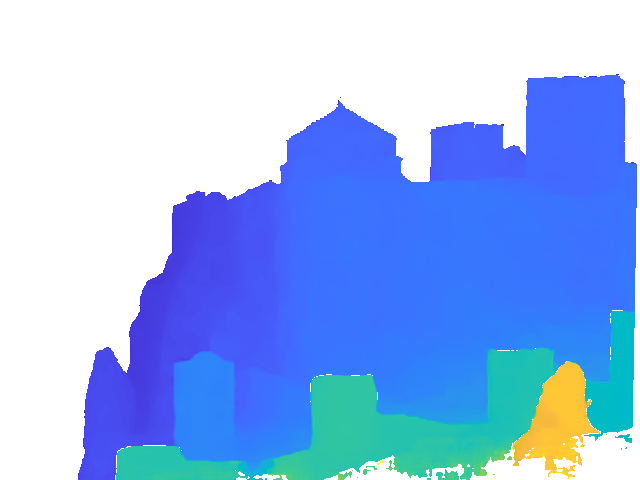}} \vspace{-1em}\\
         \includegraphics[width=0.4\columnwidth]{{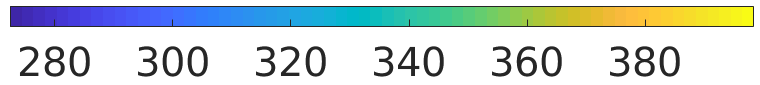}} &
         \includegraphics[width=0.4\columnwidth]{{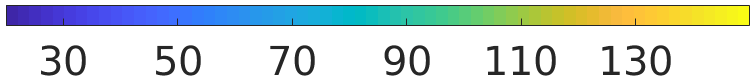}} \vspace{-0.4em}\\
        \multicolumn{1}{c}{\small Ground-truth disparity} & \multicolumn{1}{c}{\small Estimated disparity}\\ 
     \end{tabular} 

\smallskip

\centering
\includegraphics[width=0.75\columnwidth,height=0.36\columnwidth]{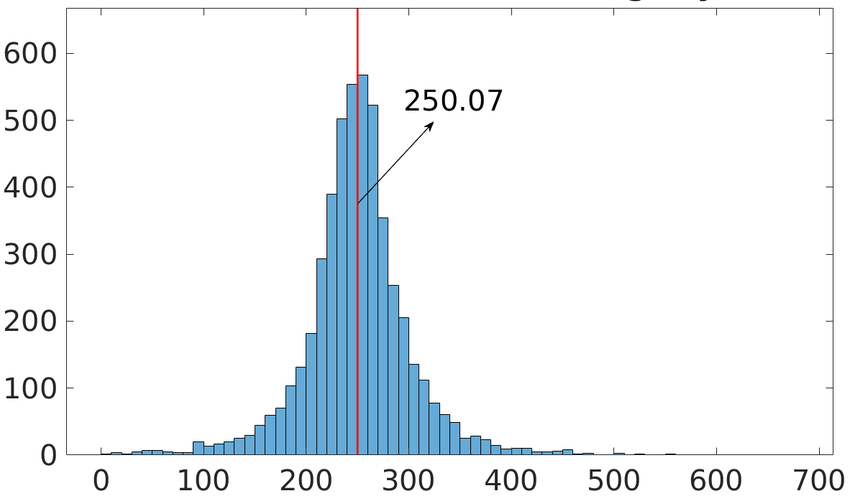} \\
\caption{\small Histogram of 5,000 cached disparity offset estimates for the example in Fig.~\ref{fig:synthetic_pseudo_rectify}. The red line marks the final value we take.
}\label{fig:ambiguity_dist}

\vspace{2mm}
\smallskip
\centering
\includegraphics[width=0.8\columnwidth]{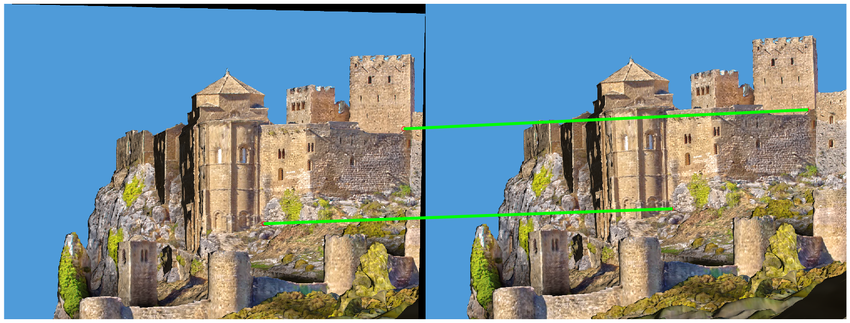}
  \caption{\small An example pair of matches used to resolve ambiguity for the example in Fig.~\ref{fig:synthetic_pseudo_rectify}. The two points are 1849.2px apart in the left image, and 1836.7px apart in the back one. Their original estimated disparities are 49.0px and 50.5px, respectively. According to Eq.~\ref{eq:ambiguity}, this yields a disparity offset estimate of 249.4px. 
}\label{fig:ex_match}
\vspace{-2mm}
\end{figure}

\noindent\textbf{Ambiguity removal.} Next, we evaluate our ambiguity removal algorithm. We set the inter-pixel distance threshold $\delta$ to 300 pixels, and the disparity difference threshold $\eta$ to 3 pixels in our ambiguity removal step. For the synthetic example in Fig.~\ref{fig:synthetic_pseudo_rectify}, Fig.~\ref{fig:ambiguity_dist} shows the histogram of all the cached disparity offset estimates produced by step 2-7 of Algo.~\ref{alg:ambiguity_remove}, while Fig.~\ref{fig:ex_match} visualizes an example pair of matches that meet our criterion in step 4. The final value taken in step 8 is marked by the red line in the histogram plot; we can see that it is aligned with the mode of the histogram, and also in agreement with the two disparity maps. We finally convert the ambiguity-free disparity map to a depth map via Eq.~\ref{eq:disp_to_depth}. The estimated depth map, compared with the ground-truth, are presented in the first row of 
Fig.~\ref{fig:synthetic_compare_alg}. One can see that our proposed method outputs a depth map with relative errors below 3\% at the majority (95.4\%) of pixel locations.  Another two synthetic examples can be seen in Fig.~\ref{fig:synthetic}.

\smallskip
\noindent \textbf{Loop and Zhang's rectification.} As a comparison, we replace our pseudo-rectification with Loop and Zhang's rectification scheme~\cite{loop1999computing}, while keeping the other parts of our pipeline unchanged. The fundamental matrix required by their approach is estimated with a RANSAC-based normalized 8-point algorithm~\cite{hartley2003multiple} from the same set of matches as that in our pseudo-rectification. Their results are shown in the second row of Fig.~\ref{fig:synthetic_compare_alg}.
The relative error map indicates that the depth map is strongly distorted when their method is used, for a similar reason to two-view SfM that we demonstrate in Sec.~\ref{sec:simulate}. In experiments, we also find that fundamental matrix estimation is quite unstable due to the tiny baseline/depth ratio, which causes Loop and Zhang's method to produce inconsistent results in different runs.

\begin{figure}[t]
\centering
    \begin{tabular}{c@{\hspace{0.02em}}c@{\hspace{0.02em}}c@{\hspace{0.02em}}}
            {\small Ground-truth depth} &
        {\small Estimated depth} & {\small Relative error (\%)} \vspace{-0.4em}\\
        \includegraphics[width=0.33\columnwidth]{{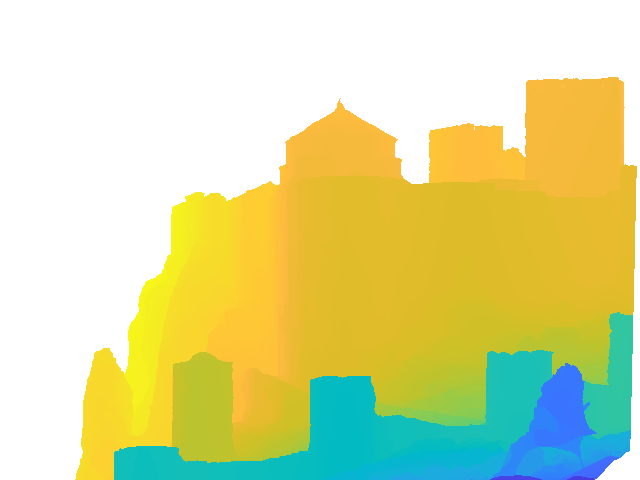}}  & 
        \includegraphics[width=0.33\columnwidth]{{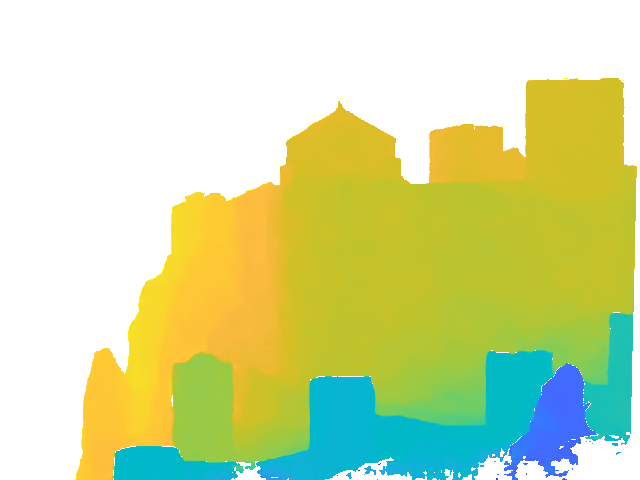}} & 
        \includegraphics[width=0.33\columnwidth]{{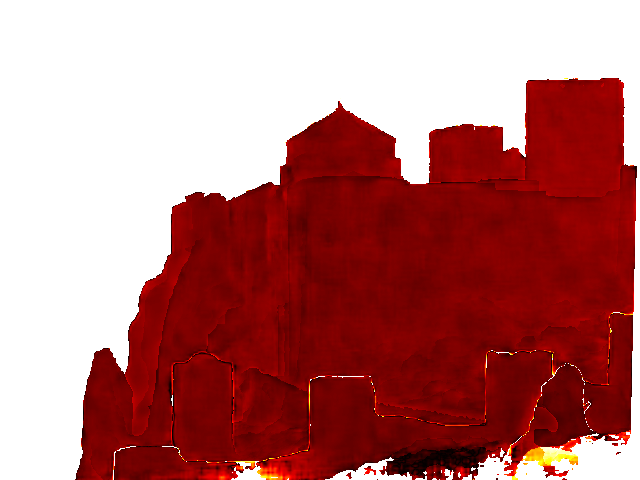}} \vspace{-0.3em}\\
        \includegraphics[width=0.33\columnwidth]{{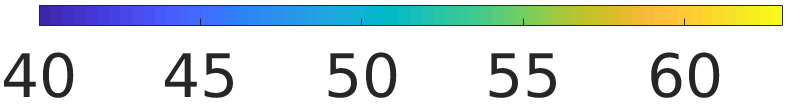}} & 
        \includegraphics[width=0.33\columnwidth]{{imgs/synthetic_loarre_castle/cbar_depth.png}} & 
        \includegraphics[width=0.33\columnwidth]{{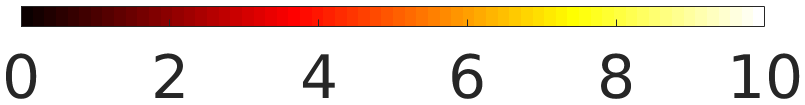}} 
        \vspace{-0.2em}\\
        \multicolumn{3}{c}{\small Our method}
        \vspace{-3mm}
        \\
        \includegraphics[width=0.33\columnwidth]{{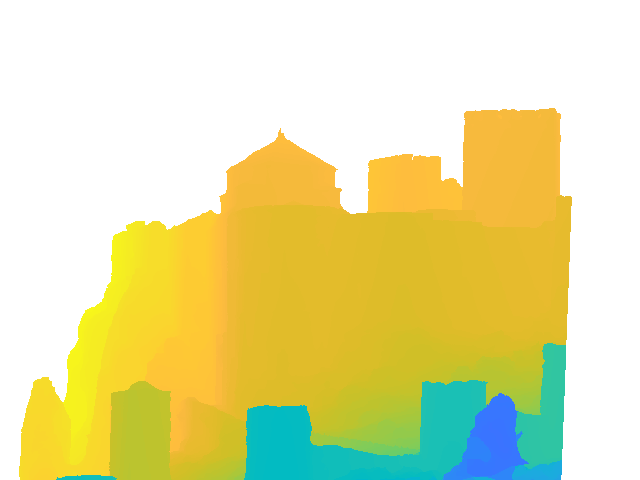}} &
        \includegraphics[width=0.33\columnwidth]{{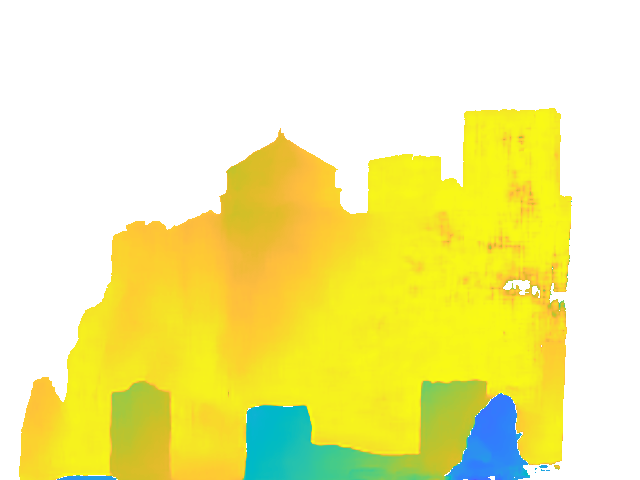}} &
        \includegraphics[width=0.33\columnwidth]{{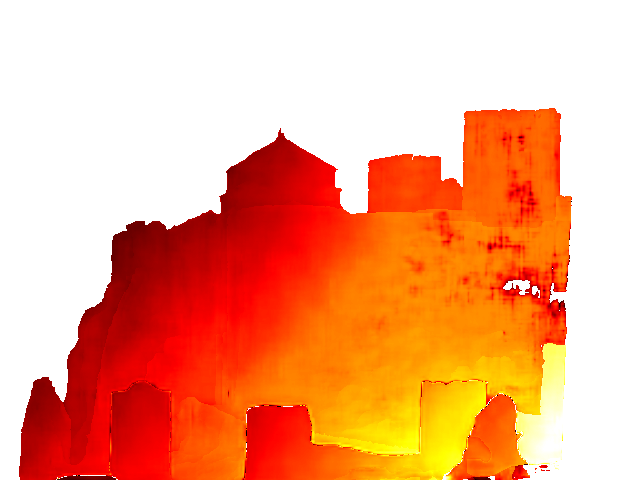}} \vspace{-0.3em}\\
        \includegraphics[width=0.33\columnwidth]{{imgs/synthetic_loarre_castle/cbar_depth.png}} &
        \includegraphics[width=0.33\columnwidth]{{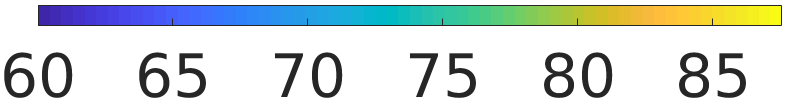}} &
        \includegraphics[width=0.33\columnwidth]{{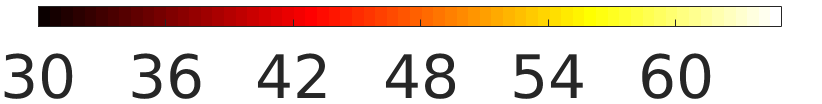}} 
        \vspace{-0.2em}\\
        \multicolumn{3}{c}{\small Replacing our pseudo-rectification with Loop and Zhang \cite{loop1999computing}} 
        \\
       \includegraphics[width=0.33\columnwidth]{{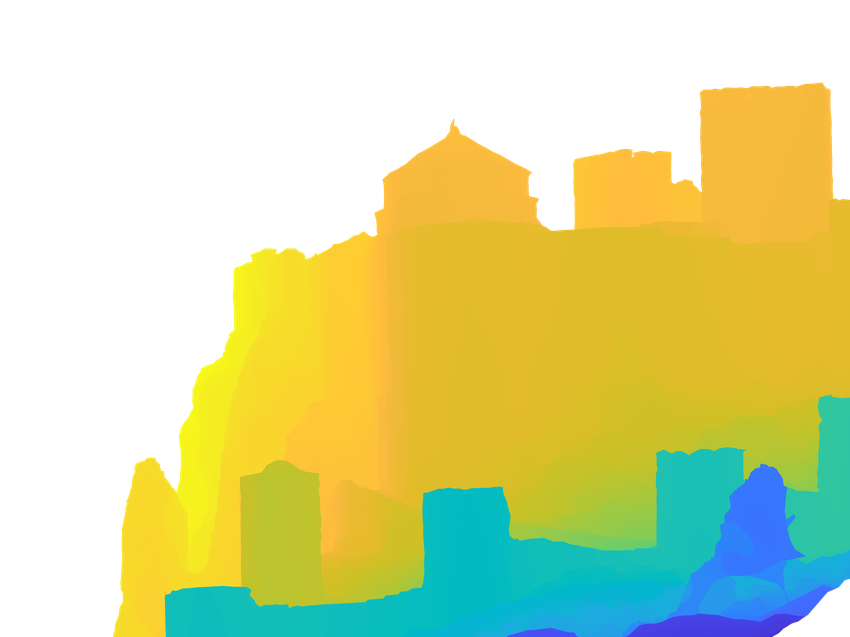}} & \includegraphics[width=0.33\columnwidth]{{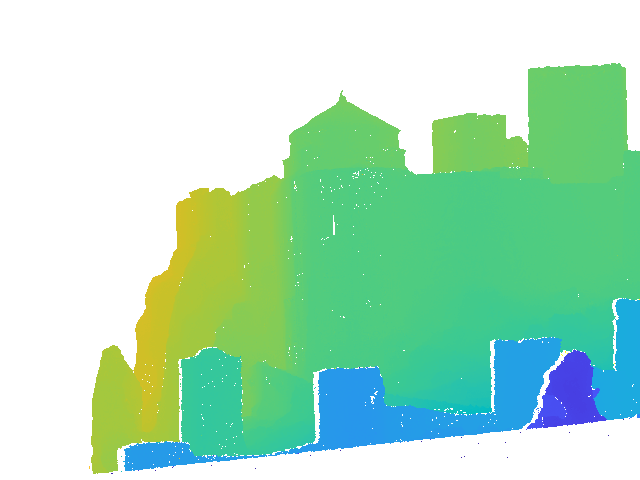}} & 
        \includegraphics[width=0.33\columnwidth]{{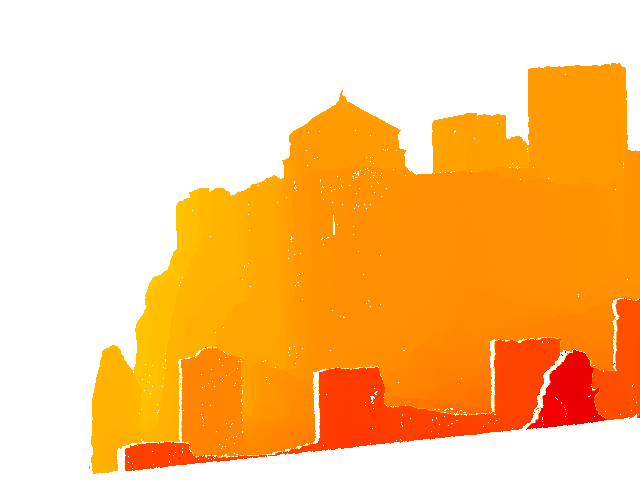}} \vspace{-0.3em}\\
        \includegraphics[width=0.33\columnwidth]{{imgs/synthetic_loarre_castle/cbar_depth.png}} & \includegraphics[width=0.33\columnwidth]{{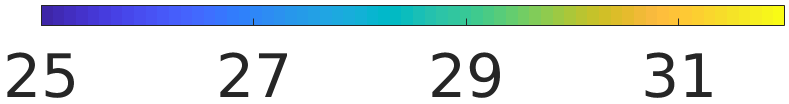}} & 
        \includegraphics[width=0.33\columnwidth]{{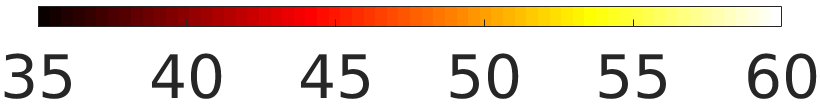}} 
        \vspace{-0.2em}\\
        \multicolumn{3}{c}{\small Multi-view SfM and MVS~\cite{schonberger2016structure,Schnberger2016PixelwiseVS}} 
        \\
     \end{tabular} 
  \vspace{-1mm}
  \caption{\small Comparison among different algorithms. For rectification-based methods, the ground-truth depth map has been warped to align with the rectified view. For SfM, we have used the full ground-truth intrinsic matrix. 
  }\label{fig:synthetic_compare_alg}
\vspace{-2mm}
\end{figure}

\smallskip
\noindent \textbf{SfM+MVS.} One might hypothesize that the back view can also help fix the bas-relief ambiguity in SfM. To test it, we feed
the three views into COLMAP~\cite{schonberger2016structure,Schnberger2016PixelwiseVS} to run multi-view SfM and MVS. We use the ground-truth camera intrinsics, and initialize the three cameras' orientations to the identity; the left, right, and back camera centers are initialized to their ground-truth locations.
The recovered pose and reconstruction are finally scaled such that the distance between the left and right camera centers is the same as in the ground-truth. Fig.~\ref{fig:synthetic_compare_alg} shows that even with the additional back view, SfM still suffers from the bas-relief ambiguity as in the case of stereo views. One of the key factors distinguishing SfM/SfSM from our approach is that, their bundle adjustment objective, i.e., average reprojection error, treats all the image-space observations equally, most of which are actually not informative for fixing the ambiguity, while our method only exploits a small carefully-chosen subset of all the observations, i.e., same-depth pixel pairs. Moreover, in practice, the principal points in camera intrinsics are unknown and can be tens of pixels away from the image center; this has no effect on our method, but can further hurt SfM/SfSM.

Tab.~\ref{tab:quantitative_compare}  quantitatively compares the aforementioned different algorithms on 40 synthetic scenes.  Unlike other methods, our approach is not affected by the bas-relief ambiguity and outputs much more accurate depth estimates. 

\begin{table}[t]
\centering
\begin{tabular}{@{\hspace{1mm}}l@{\hspace{3mm}}c@{\hspace{3mm}}c@{\hspace{3mm}}c@{\hspace{3mm}}c@{\hspace{1mm}}}
\toprule
    & Failure &$<$1\% & $<$2\% & $<$3\%  \\
    \midrule
    Ours & 0  & \textbf{45.3\%}& \textbf{80.1\%} & \textbf{96.9\%} \\
    Loop and Zhang \cite{loop1999computing} & 0 & 1.14\%& 2.73\%& 5.99\%\\
    SfM+MVS~\cite{schonberger2016structure,Schnberger2016PixelwiseVS} & 15 & 6.71\% &12.7\%& 19.1\%\\
\bottomrule
\end{tabular}
\caption{\small Quantitative results on 40 synthetic scenes for methods in Fig.~\ref{fig:synthetic_compare_alg}.   ``Failure'' means the number of scenes for which a method fails to output a depth map. The metric is the portion of pixels with relative depth error below certain threshold, i.e., 1\%, 2\%, 3\%, averaged over the successful scenes. 
}
\label{tab:quantitative_compare}
\end{table}

\begin{figure}[t]
\centering
    \begin{tabular}{c@{\hspace{0.1em}}c@{\hspace{0.1em}}}
         \includegraphics[width=0.35\columnwidth]{{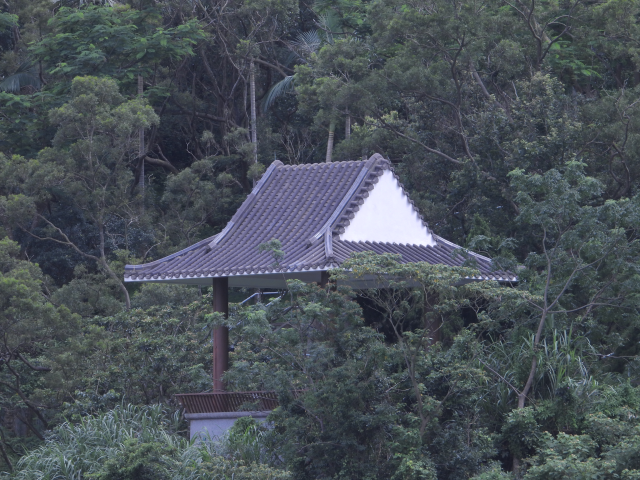}} &
         \includegraphics[width=0.35\columnwidth]{{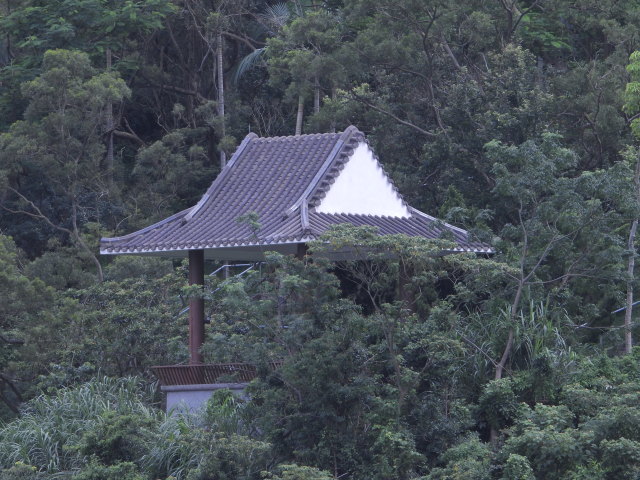}} \vspace{-0.4em}\\
        \multicolumn{2}{c}{\small Raw left and right views}\\ 

     \end{tabular} 
     
    \begin{tabular}{c@{\hspace{0.1em}}c@{\hspace{0.1em}}c@{\hspace{0.1em}}c@{\hspace{0.1em}}}
         \includegraphics[width=0.24\columnwidth]{{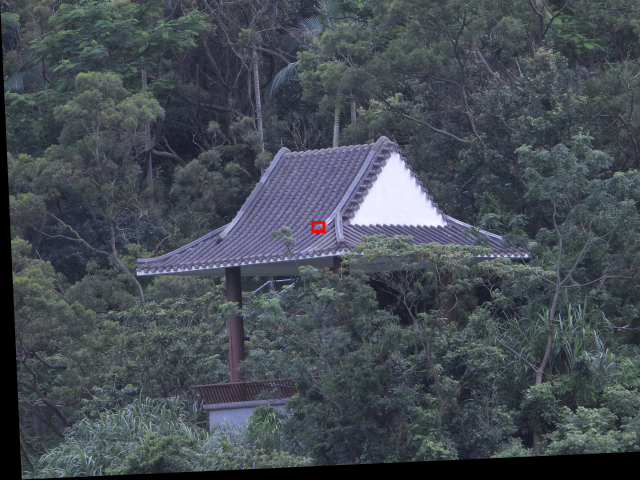}} &
         \includegraphics[width=0.24\columnwidth]{{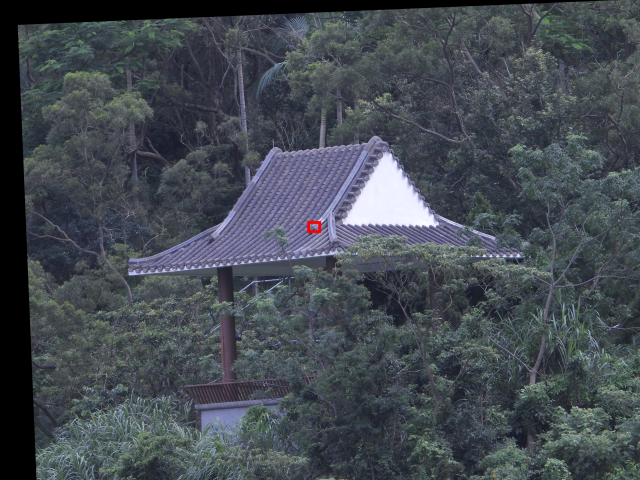}} &
         \includegraphics[width=0.24\columnwidth]{{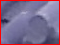}} &
         \includegraphics[width=0.24\columnwidth]{{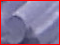}} \vspace{-0.4em}\\
        \multicolumn{4}{c}{\small Pseudo-rectified left and right views}\\
     \end{tabular} 
  \vspace{-1mm}
  \caption{\small Pseudo-rectification on real-world images.
  }\label{fig:real_pseudo_rectify}
\vspace{-2mm}
\end{figure}

\begin{figure*}[!htb]
\centering
    \begin{tabular}{c@{\hspace{0.5em}}c@{\hspace{0.5em}}c@{\hspace{0.5em}}c@{\hspace{0.5em}}c}
         {\small Pseudo-rectified left view} & {\small Ground-truth depth} & {\small Estimated depth} & {\small Relative error(\%)} \\
         
        \includegraphics[width=0.188\textwidth]{{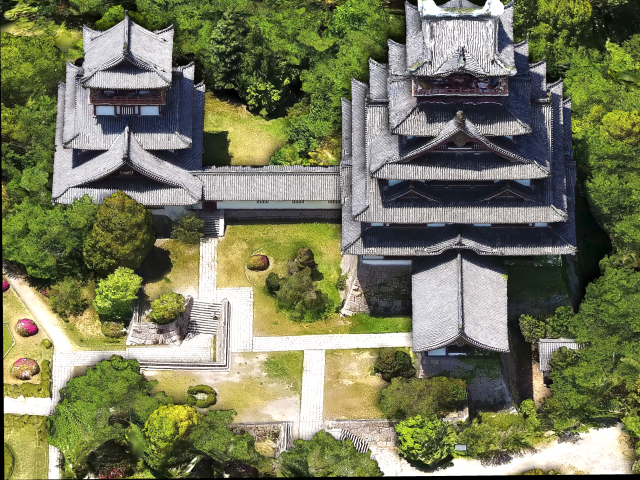}} & 
         \includegraphics[width=0.188\textwidth]{{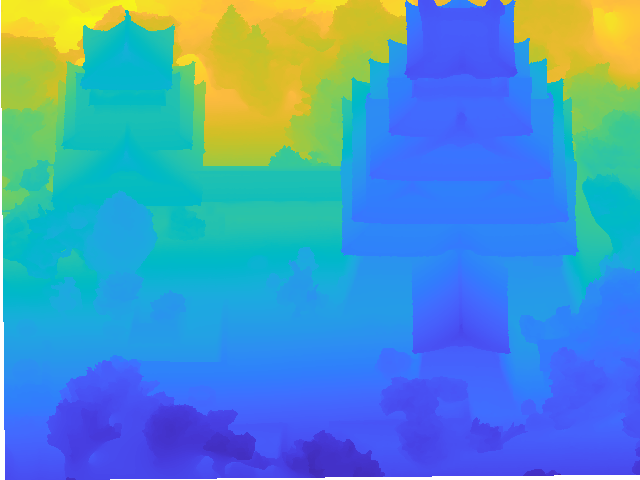}} & 
         \includegraphics[width=0.188\textwidth]{{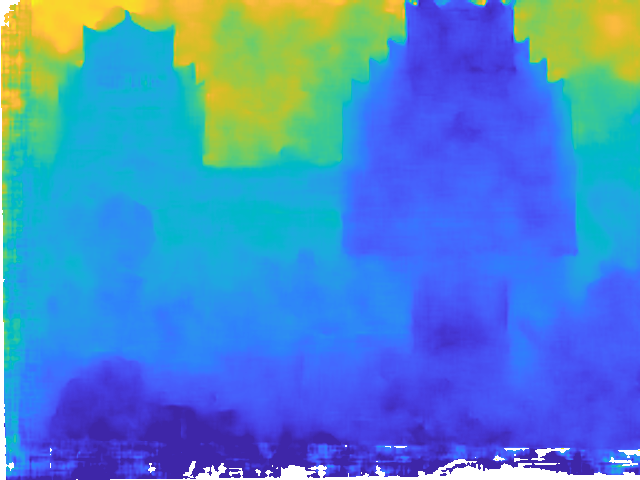}} & 
         \includegraphics[width=0.188\textwidth]{{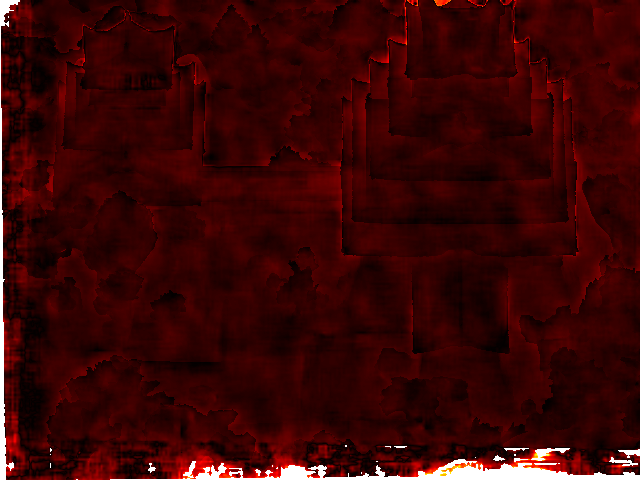}} \vspace{-0.4em}\\
         & 
         \includegraphics[width=0.188\textwidth]{{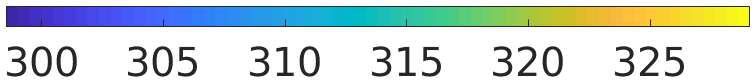}} & 
         \includegraphics[width=0.188\textwidth]{{imgs/synthetic_momoyam_castle/cbar_depth.png}} & 
         \includegraphics[width=0.188\textwidth]{{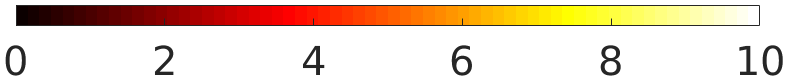}} \\
         
        \includegraphics[width=0.188\textwidth]{{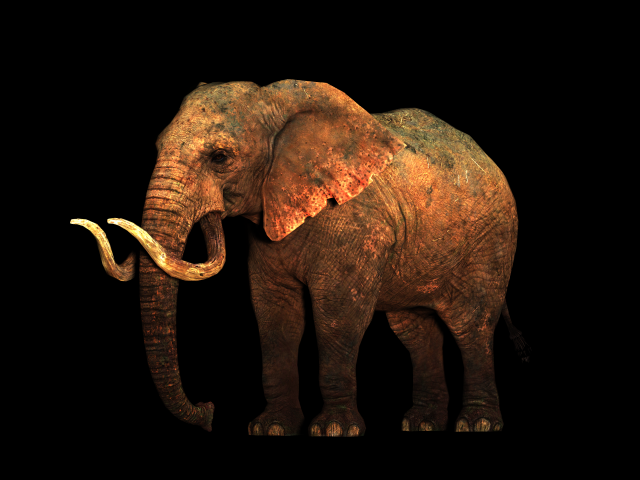}} & 
         \includegraphics[width=0.188\textwidth]{{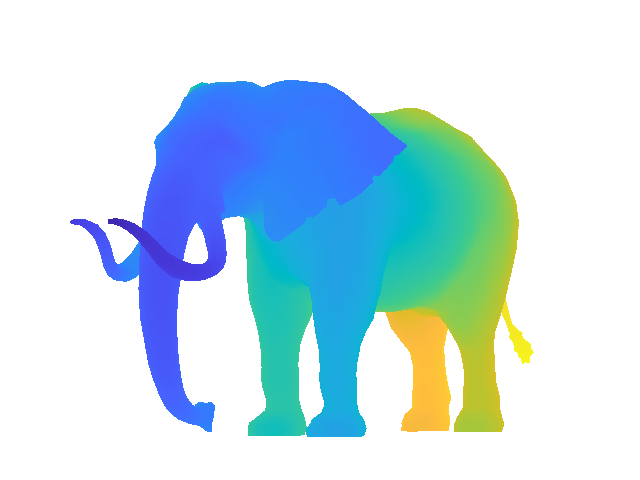}} & 
         \includegraphics[width=0.188\textwidth]{{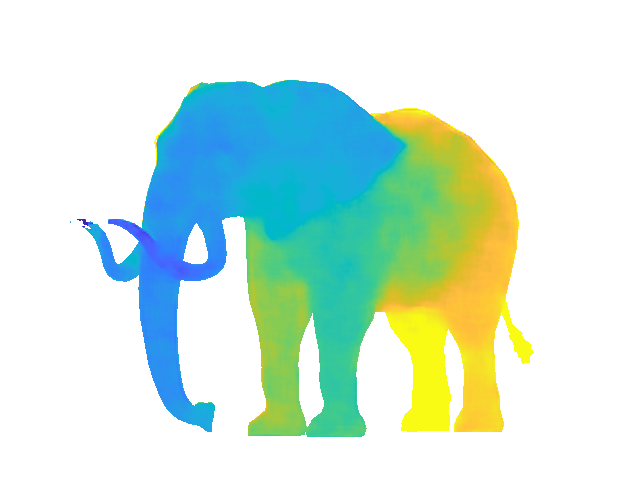}} & 
         \includegraphics[width=0.188\textwidth]{{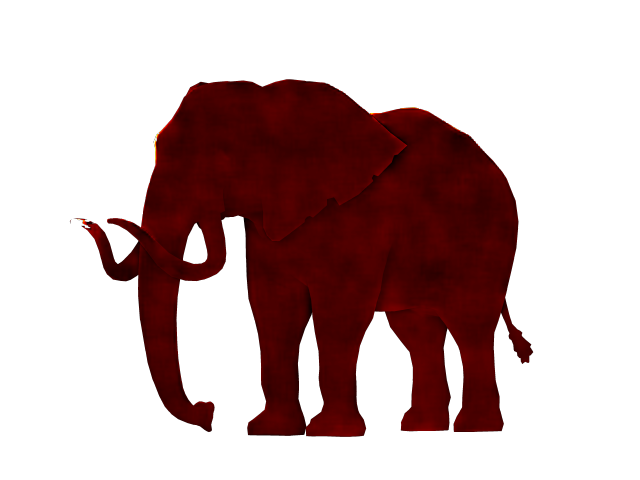}} \vspace{-0.4em}\\
         & 
         \includegraphics[width=0.188\textwidth]{{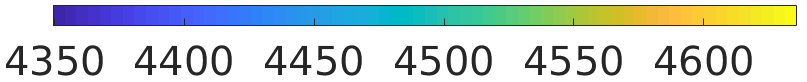}} & 
         \includegraphics[width=0.188\textwidth]{{imgs/synthetic_elephant/cbar_depth.png}} & 
         \includegraphics[width=0.188\textwidth]{{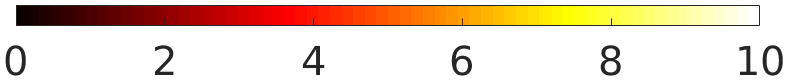}} \\
     \end{tabular} 
     \vspace{-1mm}
  \caption{\small Results on synthetic data. Portion of pixel locations with $<$3\% relative error  (top to bottom): 98.5\%, 99.7\%. 
  }\label{fig:synthetic}

\medskip

\centering
    \begin{tabular}{c@{\hspace{0.5em}}c@{\hspace{0.5em}}c@{\hspace{0.5em}}c@{\hspace{0.5em}}c}
         {\small Pseudo-rectified left view}  & {\small Estimated disparity}  & {\small Estimated disparity offset} & {\small Estimated depth (meters)} \\
         \includegraphics[width=0.188\textwidth]{{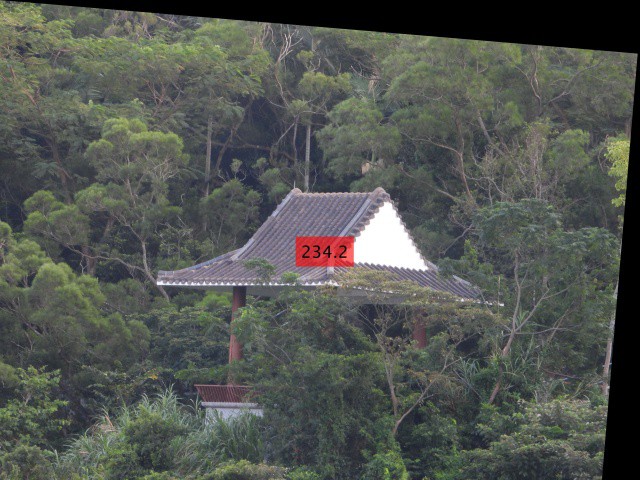}} & 
        \includegraphics[width=0.188\textwidth]{{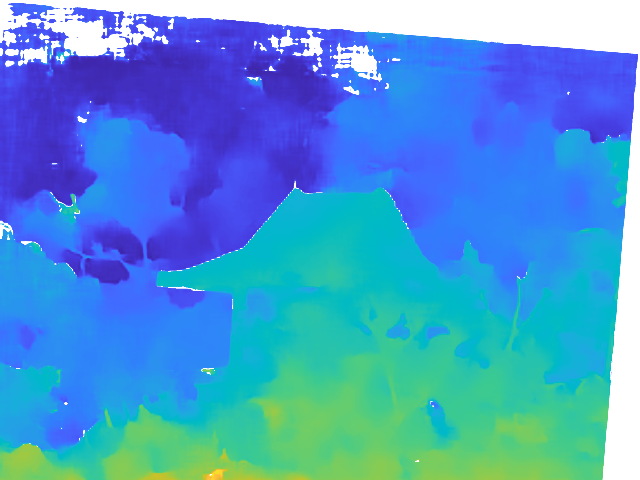}} & 
         \includegraphics[width=0.188\textwidth]{{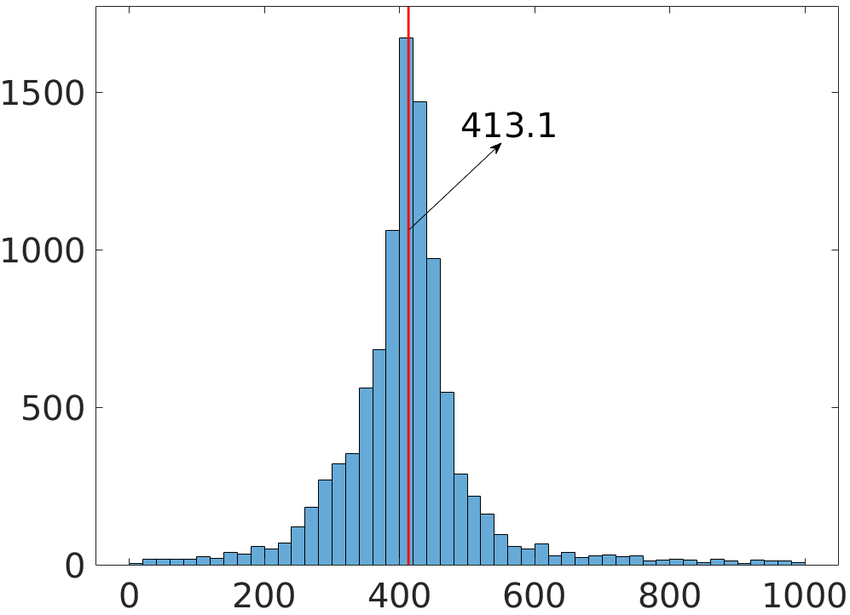}} & 
         \includegraphics[width=0.188\textwidth]{{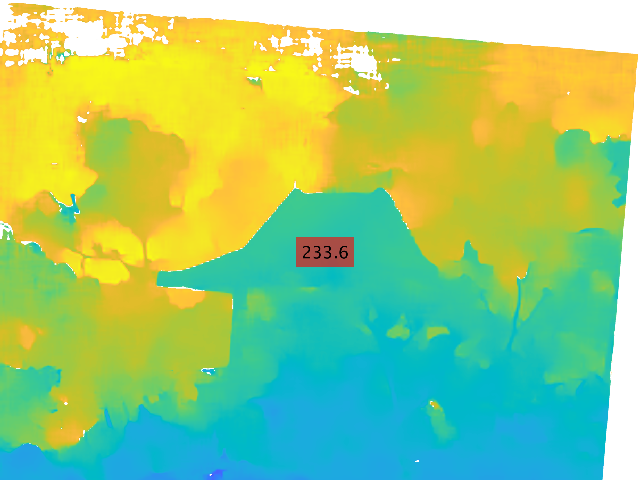}} \vspace{-0.4em}\\
         &     \includegraphics[width=0.188\textwidth]{{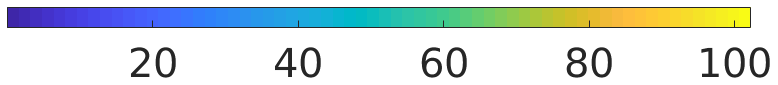}} &   &  \includegraphics[width=0.188\textwidth]{{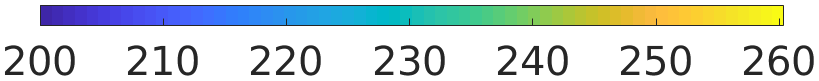}}\\
        \includegraphics[width=0.188\textwidth]{{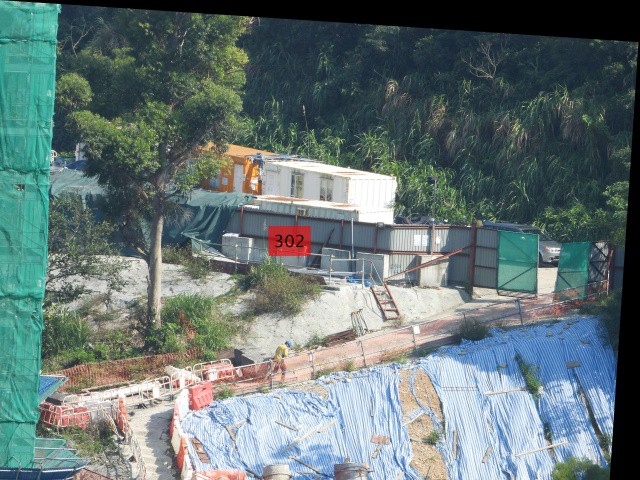}} & 
        \includegraphics[width=0.188\textwidth]{{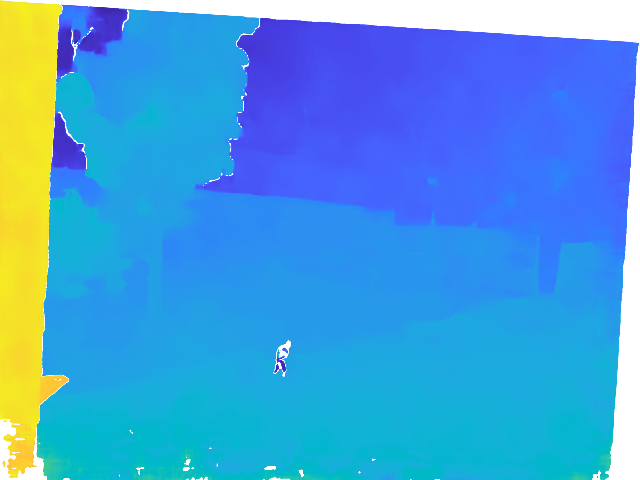}} & 
         \includegraphics[width=0.188\textwidth]{{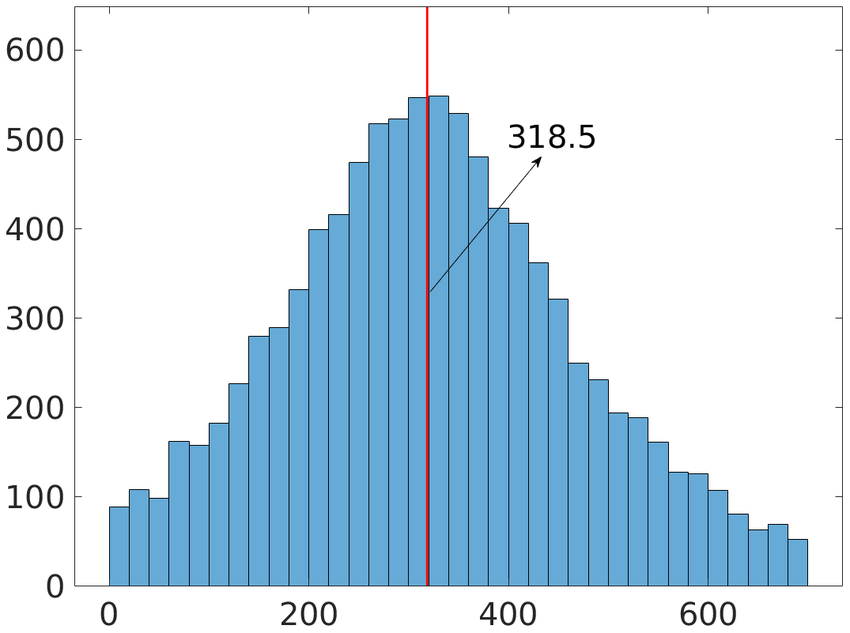}} & 
         \includegraphics[width=0.188\textwidth]{{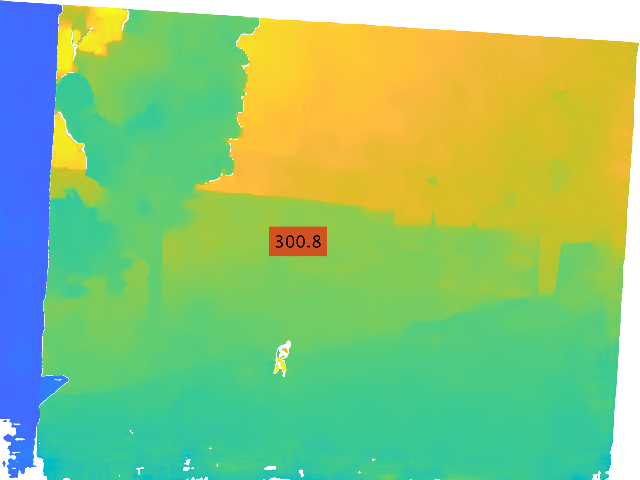}} \vspace{-0.4em}\\
         &     \includegraphics[width=0.188\textwidth]{{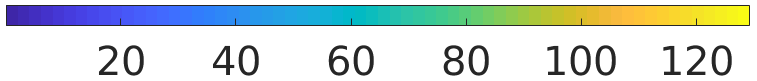}} &   &  \includegraphics[width=0.188\textwidth]{{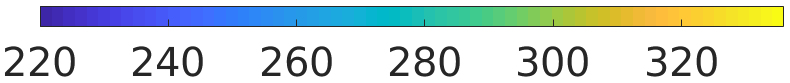}}\\
        \includegraphics[width=0.188\textwidth]{{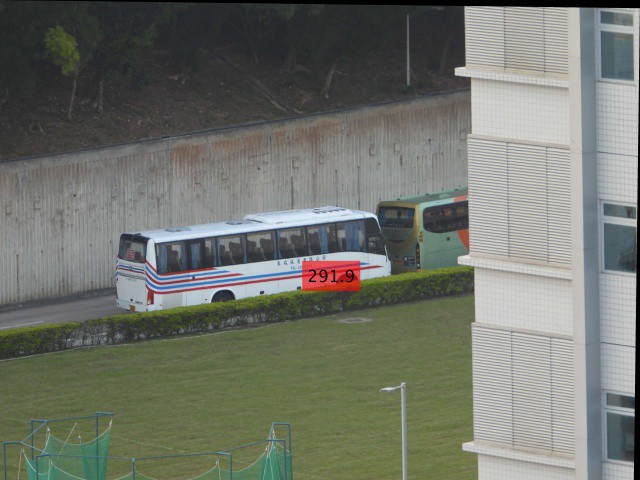}} & 
        \includegraphics[width=0.188\textwidth]{{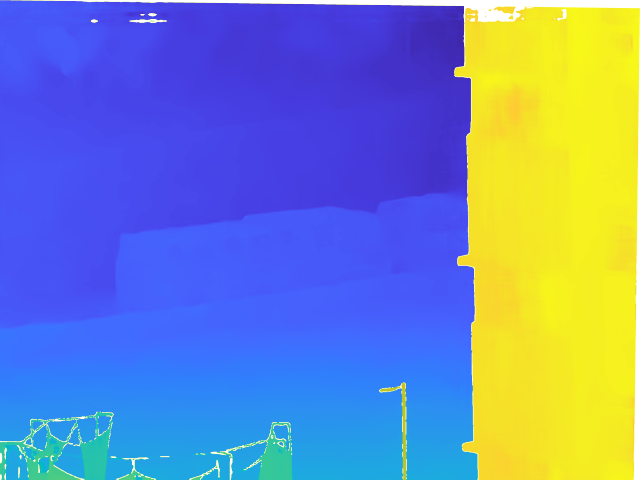}} & 
         \includegraphics[width=0.188\textwidth]{{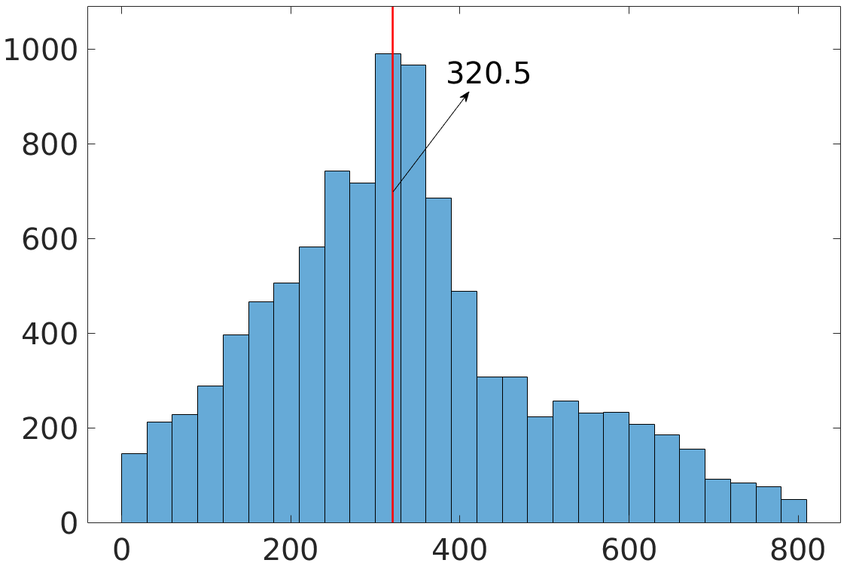}} & 
         \includegraphics[width=0.188\textwidth]{{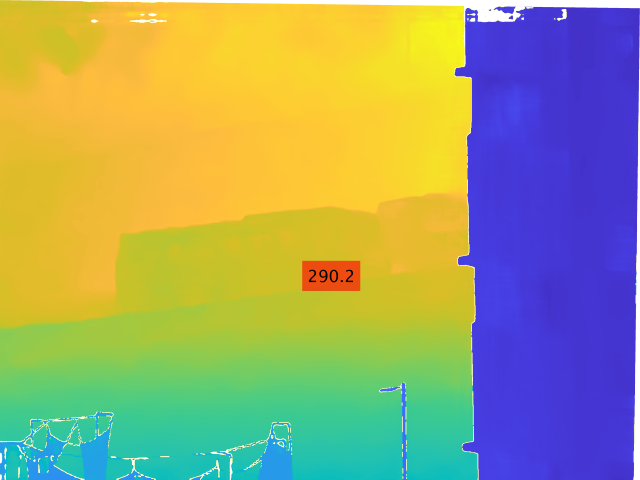}} \vspace{-0.4em} \\
         &     \includegraphics[width=0.188\textwidth]{{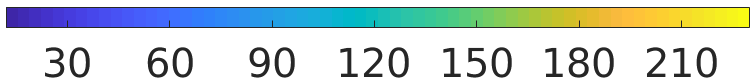}} &   &  \includegraphics[width=0.188\textwidth]{{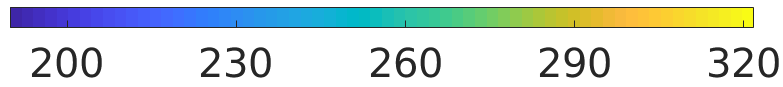}}\\
     \end{tabular} 
     \vspace{-1mm}
  \caption{\small Results on real-world data. The depth measurements from a laser rangefinder are marked in the red box on the pseudo-rectified left images; the corresponding estimated values by our method are marked on the estimated depth maps. Laser-measured values (top to bottom): 234.2m, 302m, 291.9m; our estimated values (top to bottom): 233.6m, 300.8m, 290.2m. 
  }\label{fig:real}
  \vspace{-2mm}
\end{figure*}

\subsection{Real-world data}
We capture real-world data with a Nikon P1000 super-zoom camera; the camera is mounted on a tripod and manually moved to three positions in line with our proposed camera setup in order to acquire the left, right, and back images. The captured images are of the same size, 4608 $\times$ 3456, as in the synthetic case. The 35mm equivalent focal length is 400mm, which corresponds to a camera FOV of $5.16^\circ$ horizontally and $3.44^\circ$ vertically. Because ground-truth dense depth maps are difficult to obtain for distant real-world scenes without special equipment, we use a laser rangefinder to acquire a point-wise depth measurement for a point of interest in the captured scene. We can then check if the estimated depth agrees with the measured one. The hyper-parameters, i.e., $M, \epsilon, \delta,\eta$, remain unchanged compared to the synthetic case.  

We first show qualitative results of pseudo-rectification on real-world images in Fig.~\ref{fig:real_pseudo_rectify}; like before, a 60 $\times$ 45 sub-area cropped out of the rectified views is presented to ease visual inspection. One can see that our pseudo-rectification generalizes very well to real-world images. In Fig.~\ref{fig:real}, we show the estimated depth maps; the measured depth from the laser rangefinder and the corresponding estimated value are marked inside the red boxes on the pseudo-rectified left view and the estimated depth map, respectively. From a practical perspective, the accuracy of our estimated depths is quite acceptable for applications in autonomous driving, considering the large distances to the scenes.

\section{Discussion}
In this work, we propose a novel vision-based solution to the long-range depth sensing problem in autonomous driving. We propose a three-camera system consisting of small-FOV cameras and a corresponding processing pipeline. Our end-to-end solution is very practical in that it does not assume full calibration of the camera system, and is robust to small system vibrations. Experiments show that our system enables \textit{dense} depth acquisition of faraway objects ($>$200m) that are beyond the range of most commercial LiDARs for self-driving vehicles. This can be particularly helpful for heavily-weighted autonomous trucks moving at high speed. 

As future work, we plan to conduct thorough experiments in real-world driving scenarios 
by building and testing a road-deployable hardware system.

{\small
\bibliographystyle{ieee_fullname}
\bibliography{egbib}
}

\end{document}